\documentclass[11pt]{article}



\usepackage{graphicx}
\usepackage{amssymb}
\usepackage{amsthm,amsmath,dsfont}

\usepackage{lineno}

\usepackage[colorlinks=true, citecolor=blue]{hyperref}
\usepackage{algorithm2e}
\usepackage{algcompatible}
\usepackage{wrapfig}
\usepackage{natbib}
\usepackage{comment}
\usepackage{longtable}



\usepackage{float}
\usepackage{subfig}
\usepackage{wrapfig}
\usepackage{framed,xcolor}
\usepackage{newfloat}
\usepackage[xcolor,framemethod=TikZ]{mdframed}
\usepackage{mathpazo}
\usepackage{amsthm}
\usepackage{thmtools}
\usepackage{authblk}
\patchcmd{\thebibliography}{\section*{\refname}}{}{}{}

\newcommand{\RR}{\mathbb{R}}

\newcommand{\EE}{\mathbb{E}}
\newcommand{\PP}{\mathbb{P}}

\newcommand{\dd}{\mathrm{d}}

\newcommand{\supp}{\mathsf{supp}}

\newtheoremstyle{theoremdd}
{\topsep}
{\topsep}
{\itshape}
{0pt}
{\fontfamily{cmss}\selectfont\bfseries}
{.}
{ }
{\thmname{#1}\thmnumber{ #2}\thmnote{ (#3)}}

\theoremstyle{theoremdd}

\newtheorem{proposition}{Proposition}





\mdfdefinestyle{myStyle}{roundcorner=5pt,backgroundcolor=brown!20,linecolor=red!40!black,linewidth=1.5pt}



\usepackage{titlesec}
\titleformat*{\section}{\fontfamily{cmss}\selectfont\large\bfseries}
\titleformat*{\subsection}{\fontfamily{cmss}\selectfont\normalsize\bfseries}
\titleformat*{\subsubsection}{\fontfamily{cmss}\selectfont\normalsize}
\usepackage[left=1.3 in, right=1.3 in, top=1.25 in, bottom=1.25 in]{geometry}
\graphicspath{{./figures/}}

\begin{document}
	
		
		
		\title{\fontfamily{cmss}\selectfont Sparse Travel Time Estimation from Streaming Data}
		


\author[1,2]{Saif Eddin Jabari\footnote{Corresponding author, E-mail: \url{sej7@nyu.edu}}}
\author[3]{Nikolaos M. Freris}	
\author[4]{Deepthi Mary Dilip}
\affil[1]{New York University Abu Dhabi, Saadiyat Island, P.O. Box 129188, Abu Dhabi, U.A.E.}
\affil[2]{New York University Tandon School of Engineering, Brooklyn NY}
\affil[3]{School of Computer Science and Technology, University of Science and Technology of China, Diansan Building, West Campus, 443 Huangshan Road, Hefei, Anhui, 230027 China}
\affil[4]{BITS Pilani, Dubai Campus, Dubai International Academic City, Dubai, U.A.E.}

\date{}
		
\maketitle	

{ \fontfamily{cmss}\selectfont\large\bfseries		
\begin{abstract}
{ \normalfont\normalsize
	We address two shortcomings in online travel time estimation methods for congested urban traffic. The first shortcoming is related to the determination of the number of mixture modes, which can change dynamically, within day and from day to day.  The second shortcoming is the wide-spread use of Gaussian probability densities as mixture components.  Gaussian densities fail to capture the positive skew in travel time distributions and, consequently, large numbers of mixture components are needed for reasonable fitting accuracy when applied as mixture components.  They also assign positive probabilities to negative travel times.  To address these issues, this paper derives a mixture distribution with Gamma component densities, which are asymmetric and supported on the positive numbers.  We use sparse estimation techniques to ensure parsimonious models and propose a generalization of Gamma mixture densities using Mittag-Leffler functions, which provides enhanced fitting flexibility and improved parsimony. 
	In order to accommodate within-day variability and allow for online implementation of the proposed methodology (i.e., fast computations on streaming travel time data), we introduce a recursive algorithm which efficiently updates the fitted distribution whenever new data become available.  Experimental results using real-world travel time data illustrate the efficacy of the proposed methods.
	
	\medskip
	
	\textbf{\fontfamily{cmss}\selectfont Keywords}: Multi-modal travel time distributions; sparse modeling; Mittag-Leffler density; data-driven traffic analytics.
}
\end{abstract}
}
		
		
	
	
	

\section{Introduction}
\label{S:intro}

Travel times are among the prime measures of traffic and travel performance in congested road networks. They are critical inputs in a variety of route planning applications and can vary dramatically from one location to another and by time of day.  This variability is a key factor in assessing the reliability of traffic routes. Many factors contribute to travel time variability including uncertainty about network supplies and demands, driver behavior, and queueing dynamics at traffic signals \citep{du2012adaptive,ramezani2012estimation,ramezani2015queue}.  

Travel time modeling and estimation remains an active area of research in Transportation Science: \cite{kharoufeh2004deriving} derived a stochastic model for travel time on a freeway link. \cite{carey2005alternative} established conditions for a well-behaved travel time model, and further proposed a discretized model and analyzed its convergence properties to the celebrated LWR model \citep{carey2005convergence}. \cite{ghiani2014note} showed that any continuous piecewise-linear travel time model can be generated by an appropriate Ichoua, Gendreau, Potvin (IGP) model \citep{ichoua2003vehicle}, and provided an efficient method for learning the parameters via solving a linear system of equations.  \cite{gomez2016modeling} introduced a model for vehicle routing problems with stochastic travel and service times (VRPSTT).  \cite{zheng2017methodological} developed an analytical model that captures travel time dynamics and variability in urban signalized arterials.

We present a \textbf{data-driven} methodology for learning travel times modeled as a mixture of densities.  This approach is well suited to congested urban networks, where trip information (times and positions) is only available for the vehicle transmitting the information (e.g., from taxis or car-sharing service providers) and information about other traffic characteristics (such as traffic volumes, speeds of other vehicles, and traffic control settings) is not available.  The goal is to estimate travel time distributions in this (common) type of setting.

Along expressways (uninterrupted traffic facilities), travel time distributions are typically well captured by unimodal functions such as the lognormal distribution \citep{richardson1978travel,rakha2006estimating,pu2011analytic,arezoumandi2011estimation}, the Gamma distribution \citep{polus1979study,kim2014finite,kim2015compound} and the Weibull distribution \citep{al2006new}.  In urban settings with traffic signals, travel time distributions tend to have multiple modes.  Along high-speed arterials and expressways with stop-and-go traffic, they are well represented by bi-modal distributions \citep{hofleitner2012arterial,kazagli2013arterial,ji2013travel,feng2014probe}.  In congested networks with spillover dynamics, one tends to observe more than two modes \citep{rakha2011feasibility,hofleitner2012probability,hunter2013large,yang2014travel}.  To account for this multi-modality of travel time distributions, researchers resort more and more to  mixture modeling \citep{guo2010multistate,wan2014prediction,rahmani2015non}.  The Expectation Maximization (EM) algorithm \citep{redner1984mixture} and Bayesian techniques are widely used to estimate mixture model parameters. The Bayesian approach applies Markov chain Monte Carlo (MCMC) techniques to solve the estimation problem and is known to be computationally demanding \citep{chen2014application}. As a result, the majority of prevalent methods utilize the EM algorithm, which is most suitable for Gaussian mixtures.  The concern with computation times stems from a need for real-time estimation.

The symmetric shape of Gaussian densities is in contrast to the conventional (and empirically supported) representation of travel time distributions using distributions with positive skew \citep{emam2006using,fosgerau2012valuing,jenelius2013travel,xu2014modeling,kim2015compound,jenelius2015probe,taylor2017fosgerau}.  When the underlying distributions are skewed and the mixture components are not, a large number of components is needed for accurate estimation of travel time distributions.  This can adversely impact parsimony of the model.  Another disadvantage that comes with adopting Gaussian mixture components is that the resulting probability distribution has \emph{negative travel times in its support}.  This feature is unavoidable and particularly problematic for travel time estimation over short segments that have high variability.

Traditional mixture modeling requires a priori knowledge of the number of mixture components.  This is a major limitation in the context of travel time estimation, since the number of components and their parameters changes throughout the day.  The problem of determining the optimal number of components has been addressed by researchers in various fields through sparse density estimators using support vector machines \citep{weston19991}, penalized histogram difference criteria \citep{lin2013evaluate}, and orthogonal forward regression \citep{chen2004sparse,chen2008orthogonal}.  In the transportation literature, the number of components is typically determined by seeking sparse (i.e., parsimonious) solutions to problems with large numbers of candidate components \citep{hofleitner2013online,hofleitner2014learning}.  

This paper proposes a mixture density estimation approach for real-time estimation of travel time distributions. Our analysis substantially extends and expands our previous work \citep{dilip2017sparse}.  We derive a suitable form for the mixture distribution from fundamentals of macroscopic traffic theory. The source of uncertainty about travel times can be interpreted as an absence of knowledge about detailed traffic conditions along the travel routes in question, which is represented by random traffic density profiles.  We demonstrate that, for any equilibrium pace function (defined as the reciprocal of an equilibrium speed relation), the distribution of travel times can be captured by a mixture of Gamma probability density functions.  The parameters of the mixture component densities can be tuned a priori, whence the estimation problem focuses on finding estimates for the mixture weights.  The Gamma probability densities overcome the issues mentioned above pertaining to Gaussian mixtures, since the Gamma component densities have positive support and can have asymmetric shapes (i.e., they are more flexible).

To further enhance parsimony, we devise a richer set of component densities (with variable location \emph{and} scale parameters) that can be combined to capture a wide variety of travel time distributions. This is achieved by generalizing the Gamma densities using Mittag-Leffler functions.
Subsequently, the problem becomes one of choosing those mixture components that most closely capture the empirical distributions.  We propose the use of an $\ell_1-$regularizer, which is known to promote sparsity \citep{tibshirani1996regression} and demonstrate how to apply this methodology to streaming data.  The latter is achieved by (i) updating the inputs whenever a new travel time sample or batch of travel times arrives and (ii)  warm-starting the numerical optimization; this allows for a very fast update of the fitted distribution and renders the proposed approach amenable to an online implementation capable of capturing within-day variation of travel times. 

The remainder of this paper is organized as follows: We derive the Gamma mixture from traffic flow fundamentals in \autoref{sec:TTdistDerivation}. In \autoref{sec:sparseDenEst}, we formulate the estimation problem and describe a discretization procedure that casts it as a convex program. . We specialize the estimation problem formulation (specifically, the discretization) to Gamma mixture components and present our proposed generalization using Mittag-Leffler functions in \autoref{sec:kernelChoice}.  \autoref{sec:numerics} describes the numerical optimization approach used to solve the estimation problem, while recursive estimation from streaming data is discussed in \autoref{sec:recursive}.  \autoref{sec:testing} is devoted to testing of the proposed approach using both synthetic data (for validation) and real-world data (for demonstrating the applicability in real-life settings), while \autoref{sec:conclusion} concludes the paper. Our findings firmly support the efficacy of the proposed sparse density estimation for online learning of travel time distributions, in terms of improved fitting accuracy and improved parsimony.

We also provide five appendices that support the main sections of the paper: an extensive notation table is given in \ref{A:notation}, \ref{A:summability} describes a post-processing technique for ensuring summability to unity conditions, \ref{A:sparsity} and \ref{A:postProcessing} present two approaches that can be utilized to enhance sparsity of solutions, and \ref{A:regularization} describes a means for selecting the regularization parameter.

\section{Derivation of Mixture Distribution from Traffic Flow Characteristics} \label{sec:TTdistDerivation}
\subsection{The Equilibrium Pace Function and its Properties}
Consider a vehicle traversing a path in a traffic network with terminal positions $x_1$ at the upstream end of the path and $x_2$ at the downstream end.  Let $C(x)$ denote the time instant that the vehicle crosses position $x$.  Then the travel time along the path is given by $C(x_2) - C(x_1)$. By definition, $C$ is continuous and strictly increasing.  Hence, the fundamental theorem of calculus provides a function $\Pi(x)$ such that
\begin{equation}
C(x_2) - C(x_1) = \int_{x_1}^{x_2} \Pi(x) \dd x, \label{E:tt1}
\end{equation}
where 
\begin{equation}
\Pi(x) = \frac{\dd C(x)}{\dd x}.
\end{equation}
The non-negative function $\Pi(x)$ is the \textit{pace} at position $x$ (in units of unit time per unit distance).  In a first-order macroscopic traffic flow context, the pace at position $x$ depends on the traffic density at $x$: let $\rho$ denote the traffic density, $V(\rho)$ an equilibrium speed-density relation, and $Q(\rho) = \rho V(\rho)$ an equilibrium flux function.  We denote the \textit{equilibrium pace} function by $P$, which is related to the equilibrium speed and flux functions via:
\begin{equation}
P(\rho) = \frac{\rho}{Q(\rho)} = \frac{1}{V(\rho)}.
\end{equation}
It follows that the equilibrium pace function has the following four properties:
\begin{itemize}
	\item[(i)] As $\rho \rightarrow 0$, $P(\rho) \rightarrow v_{\mathrm{fr}}^{-1}$, where $v_{\mathrm{fr}}$ is the free-flow speed.
	\item[(ii)] As $\rho \rightarrow \rho_{\mathrm{jam}}$, $P(\rho) \rightarrow \infty$, where $\rho_{\mathrm{jam}}$ is the jammed traffic density.
	\item[(iii)] $P(\rho)$ is continuous on $\rho \in (0,\rho_{\mathrm{jam}})$.
	\item[(iv)] $P(\rho)$ is non-decreasing in $\rho$.
\end{itemize}
Properties (i) - (iii) follow immediately from well-known properties of \textit{equilibrium} speed-density relations \citep{del1995functional}.  Property (iv) follows from 
\begin{equation}
\frac{\dd P(\rho)}{\dd \rho} = \frac{\dd}{\dd \rho} \Big(\frac{\rho}{Q(\rho)}\Big) = \frac{1}{Q(\rho)} \Big(1 - \frac{1}{V(\rho)} \frac{\dd Q(\rho)}{\dd \rho} \Big),
\end{equation}
and \textit{since traffic waves cannot move faster than the traffic itself}, we have for any $0 \le \rho \le \rho_{\mathrm{jam}}$ that $V(\rho) \ge \dd Q(\rho) / \dd \rho$.  Since $Q(\cdot)$ is non-negative, we immediately have that 
\begin{equation}
\frac{\dd P(\rho)}{\dd \rho} \ge 0
\end{equation}
for all $0 \le \rho \le \rho_{\mathrm{jam}}$ and, hence, $P(\cdot)$ is non-decreasing.  An example pace function, based on the Newell-Franklin speed-density relation \citep{newell1961nonlinear,franklin1961structure} is given in Figure \ref{f:Pace}. The speed relation is given by
\begin{align}
V(\rho) = v_{\mathrm{fr}} \Big(1 - e^{\frac{v_{\mathrm{b}}}{v_{\mathrm{fr}}} (\frac{\rho_{\mathrm{jam}}}{\rho} -1) }\Big), \label{E:N-F}
\end{align}
where $v_{\mathrm{b}}$ is the backward wave speed.
\begin{figure}[h!]
	\centering
	\resizebox{0.65\textwidth}{!}{%
		\includegraphics{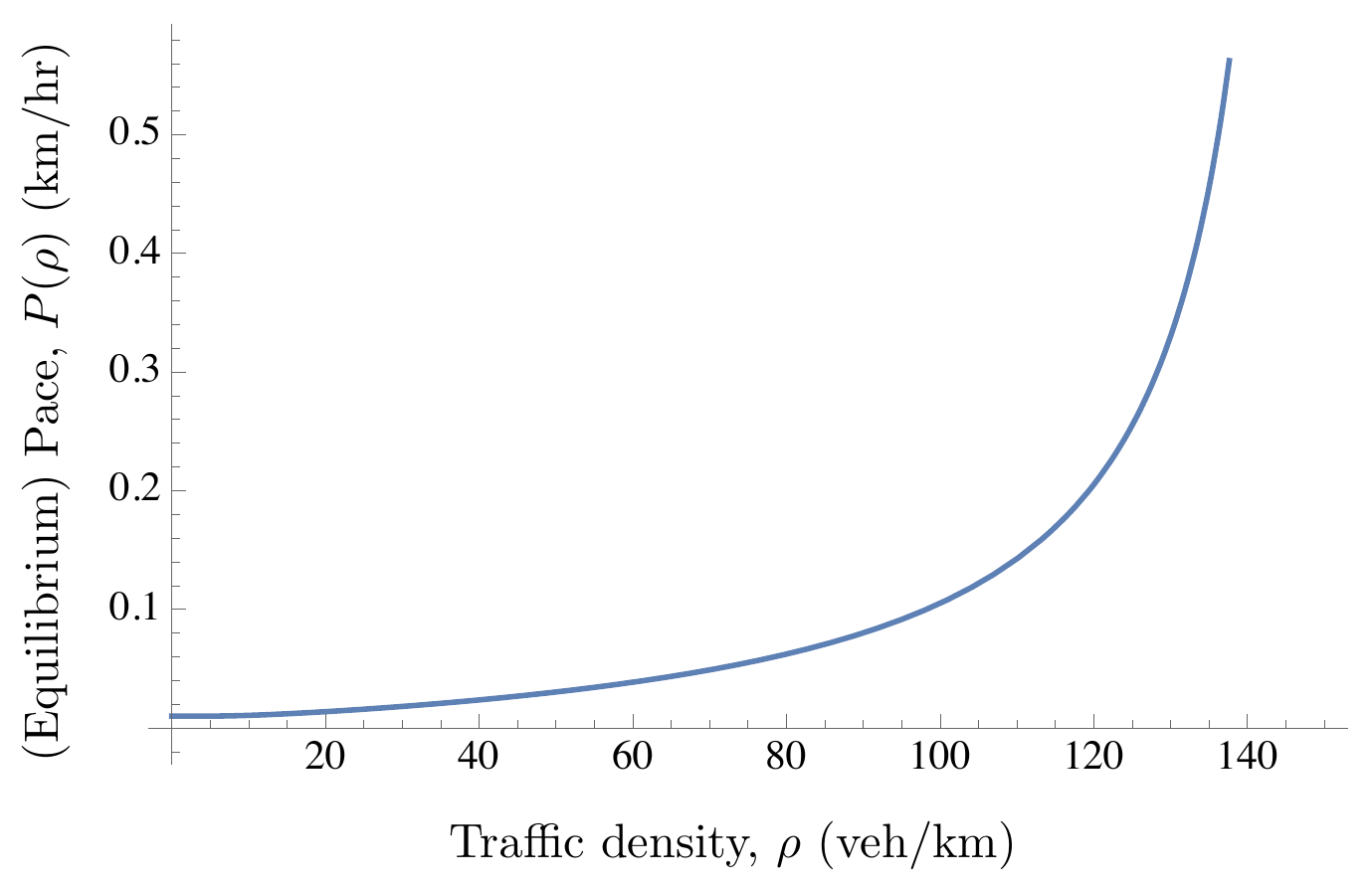}}
	\caption{An example pace function based on the Newell-Franklin speed-density relation with $v_{\mathrm{fr}} = 100$ km/hr, backward wave speed $v_{\mathrm{b}} = -20$ km/hr and $\rho_{\mathrm{jam}}=150$ veh/km.}
	\label{f:Pace}
\end{figure}

The properties above suggest that an appropriate choice for the distribution function of pace is one that is supported on $[v_{\mathrm{fr}}^{-1}, \infty)$.  A variety of known distribution functions are supported on positive intervals; among them the Gamma distribution \citep{polus1979study,kim2014finite,kim2015compound} and the Lognormal distribution \citep{richardson1978travel,rakha2006estimating,pu2011analytic,arezoumandi2011estimation} are most widely used as the distribution functions of traffic variables.  As a mixture density, the former offers some tractability properties that the latter does not.  

\subsection{The Distribution of Traffic Densities and Pace}
In \citep{carey2005convergence}, a discrete-space formulation of a few pace functions was considered and convergence to the \cite{lighthill1955kinematic} and \cite{richards1956shock} model (the LWR model) as the discrete space interval length approaches zero was demonstrated.  In a similar way, we define $\Pi(x) \equiv P\big( \rho(x), x \big)$, where we allow the equilibrium pace to depend on position, and treat dependence of density on time as implicit.  Hence \eqref{E:tt1} can be written as
\begin{equation}
C(x_2) - C(x_1) = \int_{x_1}^{x_2} P\big( \rho(x), x \big) \dd x. \label{E:tt2}
\end{equation}

Uncertainty about travel times can be interpreted as \textit{absence of (detailed) knowledge of traffic conditions}.  This is captured by treating $\rho(x)$ for each $x$ as a random variable, where $\PP(\rho(x) \le 0) = \PP(\rho(x) \ge \rho_{\mathrm{jam}}) = 0$ \textbf{must} hold.   This dictates distributions of traffic density that are supported on bounded intervals. \cite{haight1963mathematical} prescribes variants of the Beta distribution for traffic densities.  Beta distributions can be tuned to capture a variety of other distributions with bounded support as special cases.   For example, a Beta distribution with parameters $a = b = 1$ is a uniform distribution, which can be used to represent \textit{complete ignorance about traffic conditions}. Similarly, when the parameters are such that $a < b$, the distribution is positively skewed, which can be used to represent lower density traffic, while the case $a > b$ corresponds to negative skew, which represents high density traffic.  The probability density function (PDF) for traffic densities, given the parameters $a,b$ and $\rho_{\mathrm{jam}}$, can be written as
\begin{equation}
f_{\rho}(r) = \mathbb{1}_{\{ r \in [0,\rho_{\mathrm{jam}}] \}}\frac{1}{\rho_{\mathrm{jam}}^{a + b - 1} B(a,b)} r^{a - 1} \big(\rho_{\mathrm{jam}} - r \big)^{b - 1},
\end{equation}
where $B$ is the Beta function; this PDF is illustrated in Figure \ref{f:Kumaraswamy}.
\begin{figure}[h!]
	\centering
	\resizebox{0.65\textwidth}{!}{%
		\includegraphics{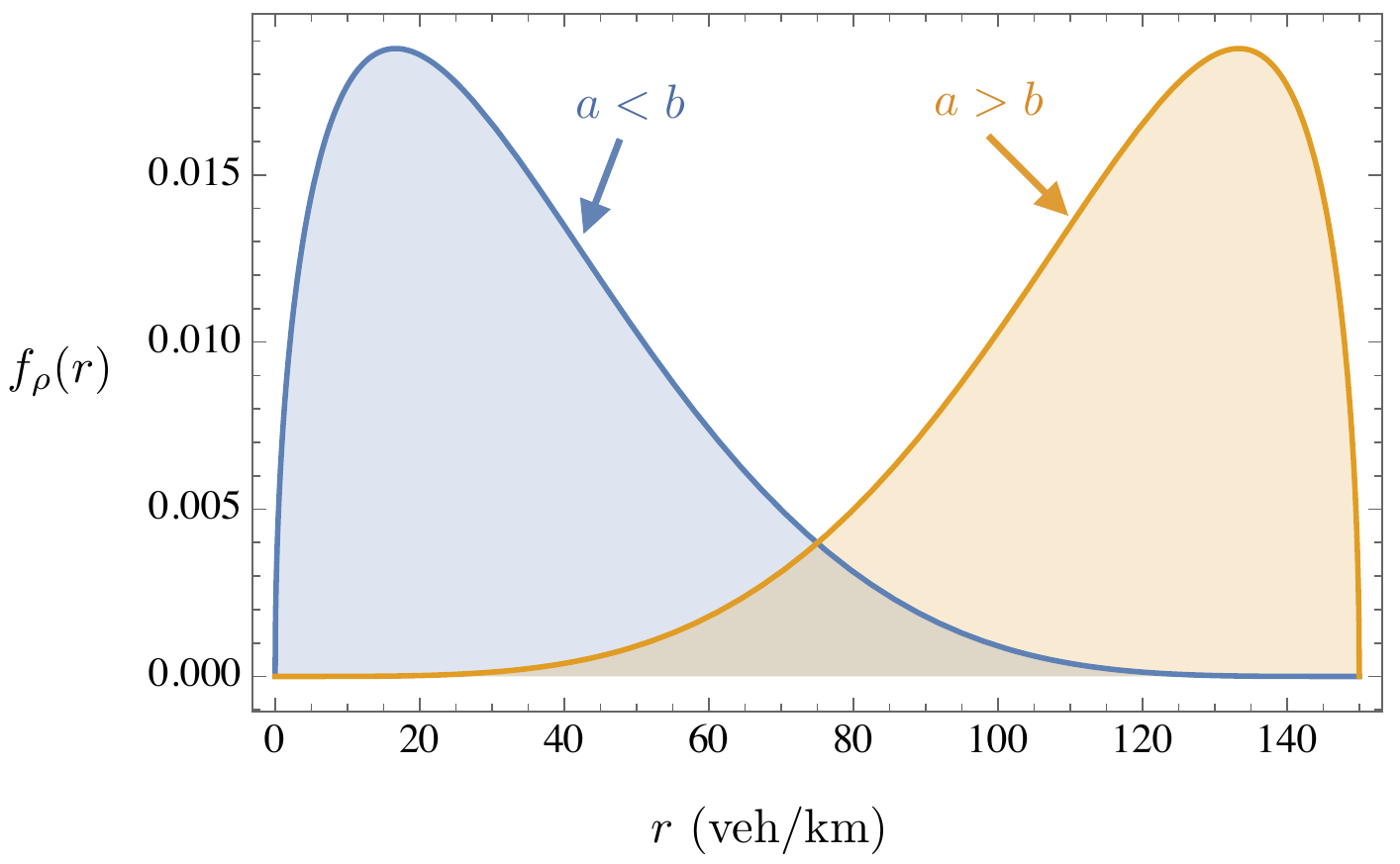}}
	\caption{Illustration of $f_{\rho}$ when $a < b$ (uncongested traffic) and $a > b$ (congested traffic).  In both cases, $\rho_{\mathrm{jam}} = 150$ veh/km.}
	\label{f:Kumaraswamy}
\end{figure}

In special cases, such as the Newell-Franklin relation \eqref{E:N-F}, which possesses a unique inverse and is differentiable, one obtains the distribution of pace directly from the pace function $P$ and the PDF of traffic density $f_{\rho}$. Denote the PDF of pace by $f_{\Pi}$; then
\begin{align}
f_{\Pi}(t)  = \mathbb{1}_{\{t v_{\mathrm{fr}}\ge 1\}} f_{\rho}\big( P^{-1}(t) \big) \frac{\dd P^{-1}(t)}{\dd t}.
\end{align}
In the general case, a unique inverse $P^{-1}$ does not exist (e.g., the widely used triangular relation).  In this case, the probability distribution of the pace at position $x$, $\Pi(x)$, can be derived from the properties of the equilibrium pace function $P$ and the distribution of traffic densities.  

\subsection{General Pace Functions and the Distribution of Travel Time} \label{ssec:TTdist}
To represent the distribution of travel times for general pace functions,  first note that continuity of equilibrium pace functions implies that there exists a polynomial that approximates any $P$ arbitrarily closely (by the Weierstrass approximation theorem).  Specifically, there exist weights $\{\zeta_k\}_{k \ge 0}$ such that, for any $\epsilon_1,\epsilon_2>0$,
\begin{align}
\Big| P(\rho) - \sum_{k=0}^{\infty} \zeta_k \rho^k \Big| \le \epsilon_1
\end{align}
for all $\rho \in [0, \rho_{\mathrm{jam}} - \epsilon_2]$.  In other words, on any closed interval that does not include $\rho_{\mathrm{jam}}$, there exists a polynomial that approximates $P$ uniformly (since $P$ has an asymptote at $\rho_{\mathrm{jam}}$).  Hence,
\begin{equation}
P(\rho) \approx \sum_{k=0}^{\infty} \zeta_k \rho^k \label{E:weierstrass}
\end{equation}
for all $\rho \in [0, \rho_{\mathrm{jam}})$. 
The characteristic function of $\Pi(x)$ is given by $\varphi_{\Pi}(s) \equiv \EE e^{is P(\rho)}$, where dependence on $x$ is made implicit, and $i \equiv \sqrt{-1}$ is the imaginary unit. Expanding $e^{is P(\rho)}$ using Maclaurin series, we have that
\begin{align}
\varphi_{\Pi}(s) &= \EE \sum_{j = 0}^{\infty} \frac{(is)^j}{j!} \big(P(\rho)\big)^j = 1 + \EE \sum_{j = 1}^{\infty} \frac{(is)^j}{j!} \big(P(\rho)\big)^j.
\end{align}
Expanding the terms involving the pace function using \eqref{E:weierstrass}, we have that
\begin{align}
\big(P(\rho)\big)^j &= \Big( \sum_{k=0}^{\infty} \zeta_k \rho^k \Big)^j = \sum_{k_1=0}^{\infty} \hdots \sum_{k_j=0}^{\infty} \prod_{l=1}^j \zeta_{k_l} \rho^{k_1 + \hdots + k_j}  = \sum_{m=0}^{\infty} \bigg( \sum_{k_1 + \hdots + k_j = m} \binom{m}{k_1 ~ \hdots ~ k_j} \prod_{l=1}^j \zeta_{k_l} \bigg) \rho^m.
\end{align}
Since the weights $\{\zeta_k\}_{k \ge 0}$ are constant (they depend on the pace function), for each $m$ the product term $\prod_{l=1}^j \zeta_{k_l}$ can be treated as a constant, which we denote by $C_m$. By the multinomial theorem, we have that $\sum_{k_1 + \hdots + k_j = m} \binom{m}{k_1 ~ \hdots ~ k_j} = j^m$.  Hence,
\begin{align}
\varphi_{\Pi}(s) = 1 +  \EE \sum_{j = 1}^{\infty} \frac{(is)^j}{j!} \sum_{m = 0}^{\infty} C_m j^m \rho^m = 1 +  \sum_{m = 0}^{\infty} \sum_{j = 1}^{\infty} \frac{(is)^j}{j!} C_m j^m \EE \rho^m.
\end{align}
For the PDF of traffic density \eqref{f:Kumaraswamy}, the $m$th moment is given by:
\begin{equation}
\EE \rho^m = \rho_{\mathrm{jam}}^m \frac{B(a + m, b)}{B(a,b)} = \rho_{\mathrm{jam}}^m\frac{\Gamma(a+b) \Gamma(a + m)}{\Gamma(a) \Gamma(a + b + m)}, 
\label{E:moments}
\end{equation}
where $\Gamma$ is the Gamma function. Define $\beta_m \equiv a + b + m$; then
\begin{align}
\varphi_{\Pi}(s) = 1 +  \sum_{m = 0}^{\infty} C_m  \rho_{\mathrm{jam}}^m \frac{\Gamma(a+b) \Gamma(a + m)}{\Gamma(a)} \sum_{j = 1}^{\infty} \frac{(is)^j}{j!} \frac{j^m}{\Gamma(\beta_m + j)} \frac{\Gamma(\beta_m + j)}{\Gamma(\beta_m)}.
\end{align}
By appeal to the second mean value theorem for integrals in conjunction with the integral test for convergence of infinite series, it can be shown that there exist constants $\widetilde{C}_m$ that depend on $m$ (but not $j$) such that
\begin{align}
\sum_{j = 1}^{\infty} \frac{(is)^j}{j!} \frac{j^m}{\Gamma(\beta_m + j)} \frac{\Gamma(\beta_m + j)}{\Gamma(\beta_m)} = \widetilde{C}_m \sum_{j = 1}^{\infty} \frac{(is)^j}{j!} \frac{\Gamma(\beta_m + j)}{\Gamma(\beta_m)}.
\end{align} 
Defining the weights $\theta_m \equiv C_m  \rho_{\mathrm{jam}}^m \frac{\Gamma(a+b) \Gamma(a + m) \widetilde{C}_m }{\Gamma(a)}$, we have that 
\begin{align}
\varphi_{\Pi}(s) = 1 +  \sum_{m = 0}^{\infty} \theta_m \sum_{j = 1}^{\infty} \frac{(is)^j}{j!} \frac{\Gamma(\beta_m + j)}{\Gamma(\beta_m)}.
\end{align}
Note that (i) the characteristic function of a Gamma PDF with shape parameter $\beta$ and scale parameter $\sigma$, denoted $\varphi_{\Gamma}(s)$, is given by
\begin{equation}
\varphi_{\Gamma}(s) = \big(1 - \sigma is\big)^{-\beta} = \sum_{j=0}^{\infty} \frac{(is)^j}{j!} \sigma^j \frac{\Gamma(\beta + j)}{\Gamma(\beta)} \label{E:CharGamm}
\end{equation}
and that (ii) the characteristic function associated with a mixture distribution is a mixture of the characteristic functions of the component distributions.  That is, if a PDF is given by
\begin{align}
f_X(x) = \sum_{m=0}^{\infty} \theta_m f_m(x)
\end{align}
for some random variable $X$ with range $\mathcal{R} \subseteq \RR$, where $\{f_m\}_{m \ge 0}$ are mixture component PDFs, then
\begin{align}
\varphi_X(s) & = \int_{\mathcal{R}} e^{isx} f_X(x) \dd x = \int_{\mathcal{R}} e^{isx}  \sum_{m=0}^{\infty} \theta_m f_m(x) \dd x = \sum_{m=0}^{\infty} \theta_m \int_{\mathcal{R}} e^{isx} f_m(x) \dd x = \sum_{m=0}^{\infty} \theta_m \varphi_m(s),
\end{align}
where $\{\varphi_m\}_{m \ge 0}$ are the characteristic functions associated with the component PDFs.  These two properties imply that the PDF of pace $f_{\Pi}$ can be captured by an appropriately tuned mixture of Gamma PDFs with scale parameter $\sigma = 1$ and shape parameters $\{\beta_m\}_{m \ge 0}$ provided that $\sum_{m=0}^{\infty} \theta_m = 1$.  When the latter holds, we have that
\begin{align}
\varphi_{\Pi}(s) = \sum_{m=0}^{\infty} \theta_m \sum_{j = 0}^{\infty} \frac{(is)^j}{j!} \frac{\Gamma(\beta_m + j)}{\Gamma(\beta_m)} = \sum_{m=0}^{\infty} \theta_m (1 - is)^{-\beta_m}.
\end{align}
The PDF of a Gamma distributed random variable with shape parameter $\beta$ and scale parameter $\sigma$ is given by
\begin{equation}
f_{\Gamma}(t;\beta,\sigma) = \mathbb{1}_{\{t\ge 0\}} \frac{1}{\sigma \Gamma(\beta)} \Big(\frac{t}{\sigma}\Big)^{\beta - 1} e^{-\frac{t}{\sigma}}. \label{eq_gammaDen}
\end{equation}
Hence, the distribution of pace can be written as 
\begin{align}
f_{\Pi}(t) = \sum_{m=0}^{\infty} \theta_m f_{\Gamma}(t;\beta_m,1).
\end{align}

The result above generalizes immediately from pace to travel time: it may be assumed that $P$ is continuous in $x$ since lane additions/drops do not occur abruptly and when speed limits change, drivers cannot adjust their speeds instantaneously, since equilibrium relations are governed by driving behavior -- see \citep{jabari2014probabilistic,jabari2018stochastic,zheng2018traffic}.  Thus by the mean-value theorem, there exists $x_1 \le \overline{x} \le x_2$ such that
\begin{equation}
P\big( \rho(\overline{x}), \overline{x} \big)(x_2 - x_1) = \int_{x_1}^{x_2} P \big( \rho(x), x \big) \dd x.
\end{equation}
Hence, the travel time along the path starting at $x_1$ and terminating at $x_2$ can be represented by the pace evaluated at an ``intermediate location''.  That is, $C(x_2) - C(x_1) = P\big( \rho(\overline{x}), \overline{x} \big)(x_2 - x_1)$ and the same procedure applied to represent the distribution of pace as a mixture can be applied to travel time.  Specifically, by continuity we have that
\begin{align}
C(x_2) - C(x_1) = P\big( \rho(\overline{x}), \overline{x} \big)(x_2 - x_1) = \sum_{k=0}^{\infty} \widetilde{\zeta}_k \rho^k,
\end{align}
where we write $\{\widetilde{\zeta}_k\}_{k \ge 0}$ to distinguish the weights associated with travel time from those associated with pace.  Following the same procedure above, we can obtain
\begin{align}
f_{T}(t) = \sum_{m=0}^{\infty} \theta_m f_{\Gamma}(t;\beta_m,1), \label{E:gammaMix}
\end{align}
where $f_T$ is the PDF of travel time.  We close this section with some remarks about the  mixture distribution \eqref{E:gammaMix}.
\begin{enumerate}
	\item The most commonly used mixture densities (e.g., Gaussian, biweight, and Epanechnikov) all suffer from assigning non-zero probability to negative travel times. Gamma mixture densities overcome this drawback.
	\item In the derivation above, the shape parameters $\{\beta_m\}_{m \ge 0}$ are arbitrary; the specifics of the distribution of the equilibrium pace function $P$ being subsumed into the mixture weights, $\{\theta_m\}_{m \ge 0}$.  The shape parameters, therefore, can be chosen \emph{a priori}.
	\item Since $\sigma = 1$ for all $m$, we have that the shape parameters bear the sole responsibility of determining the locations of the mixture components.  The locations are represented by the peaks of the distributions, located at the modes, which are given by $\{\beta_m -1\}_{m \ge 0}$.
	\item There are three main drawbacks of the mixture distribution above:
	\begin{enumerate}
		\item The shape of each of the component distributions depends on location: the variance of component distribution $m$ is given by $\beta_m$.  This results in an undesirable feature referred to as boundary bias. We address this in \autoref{ssec:boundarybias}.
		\item We lose some flexibility (and model parsimony) as a result of fixing the scale parameters to a single value $\sigma=1$.  We address this issue in \autoref{ssec:ml}, where we propose a generalization of the component PDFs that allow for variable scale parameters.
		\item The mixture involves an infinite number of components, which renders it infeasible from an estimation standpoint.  In the following sections, we set the mixture to have $M < \infty$ components, where $M$ is is chosen to be sufficiently large.  We address the errors associated with truncation in \autoref{lemNonAdaptive} and \autoref{lemML} and present a zero-overhead post-processing step in \ref{A:summability} to ensure that $\sum_{m=0}^M \theta_m = 1$ is satisfied.
	\end{enumerate}
\end{enumerate}

\section{Empirical Travel Time Distribution and Sparse Estimation}
\label{sec:sparseDenEst}
This section presents the estimation problem that we seek to solve.  In essence, we seek to find a mixture distribution that most closely resembles the distribution of the travel time data.  For the latter, we propose the use of a generalization of a histogram in which the histogram bins can are replaced by \textit{kernels}, which can be represented by any PDF.  The rectangular bins of a typical histogram can be seen as a special case of this, where the chosen kernel is a uniform PDF.

\subsection{Parzen Density Estimator: Empirical Distribution}
\label{ssec:denEstProb}
Given $S$ samples $T_1,\hdots,T_S$ drawn from a population with (unknown) probability density function  $f$, the \textit{Parzen density}, also known as \emph{Parzen window (PW) estimator} \citep{parzen1962estimation,cacoullos1966estimation,raudys1991effectiveness,silverman1986density} of travel time $t$ is given by:
\begin{equation}\label{eq:Parzen}
	\widehat{f}(t) = \frac{1}{S} \sum_{j=1}^S \kappa_h(t-T_j),
\end{equation}
where $\kappa_h$ is a window (or kernel) of width $h$, and $h$ is called the smoothing parameter. The Parzen density can equivalently be interpreted as a modified  histogram, allowing for the ``bins'' to be non-rectangular. 
As an example, choosing $\kappa_h \equiv \delta$, where $\delta$ is the Dirac delta function, we get the standard empirical distribution $\frac{1}{S} \sum_{j=1}^S \delta(t-T_j)$,  which uses kernels with zero bandwidth, $h=0$. Typically, $\kappa_h$ is a PDF; for example, we use the Gaussian density with variance $h^2$ in our experiments. 
Several methods have been proposed to determine $h$ based either on minimizing the mean square error or based on cross-validation techniques (see \citep{lacour2016estimator} and references therein for a contemporary treatment of the bandwidth selection problem). 

Parzen window (PW) estimators can also be regarded as a special type of finite mixture models, where the mixture components are assigned equal weights and are located exactly at the training data. The PW estimator generally requires as many components as the number of training samples.  As a result, it may require substantial storage requirements.  In this paper, the PW estimators serve as empirical distribution functions (or generalized histograms), and the goal is to develop and fit parsimonious (light-weight) models that may as well achieve higher (out-of-sample) prediction accuracy.

\subsection{Sparse Mixture Density Estimation} \label{ssec:estimationProblem}
Consider the mixture density
\begin{equation}
	\overline{f}(t) = \sum_{m=0}^{M-1} \theta_m \phi_m(t), \label{kernel_pdf}
\end{equation}
where $\{\phi_m\}_{m=0}^{M-1}$ are the component density functions ($M$ in total) and $\{\theta_m\}_{m=0}^{M-1}$ are the component weights.  
We will allow $M$ to be large so that \eqref{kernel_pdf} is rich enough to fit a broad class of distributions.  Our aim is to achieve a sparse representation of $\overline{f}$ (a parsimonious fit), i.e., one with most of the elements of the vector $\theta = [\theta_0 \hdots \theta_{M-1}]^{\top}$ being zero  while maintaining test performance or generalization capability comparable to that of the PW estimate obtained with an optimized bandwidth $h$.  We thus seek to solve:
\begin{equation}
	\underset{\theta \in \Omega}{\mathrm{minimize}} ~~ \frac{1}{2} \Big\lVert \widehat{f} - \sum_{m=0}^{M-1} \theta_m \phi_m \Big\rVert_2^2 + w \lVert \theta \rVert_1, \label{eq_sparseDenProblem}
\end{equation}
where $\Omega \subseteq \RR^M$ is a set of $M$-dimensional vectors that we consider for the optimization problem, and $w\ge0$ is the regularizing parameter. The $L_2-$norm is over a suitably chosen (infinite-dimensional) functional space and the $\ell_1-$norm is the usual (finite dimensional) vector norm, i.e., the sum of absolute values of the vector entries. 
Typically, 
\begin{equation}
	\frac{1}{2} \big\lVert \widehat{f} - \sum_{m=0}^{M-1} \theta_m \phi_m \big\rVert_2^2 = \frac{1}{2} \int_{\RR_{+}} \Big( \widehat{f}(t) - \sum_{m=0}^{M-1} \theta_m \phi_m(t) \Big)^2 \mathrm{d}t.
\end{equation}
The first term in the objective function, $\| \widehat{f} - \sum_{m=0}^{M-1} \theta_m \phi_m\|_2^2$, is a measure of \textit{goodness-of-fit}: it is the (squared) $L_2-$distance between the empirical distribution (of the data) $\widehat{f}$ and the fitted distribution $\overline{f}$.  The second term is an $\ell_1-$\emph{regularizer}: $\|\cdot\|_1$ is known to promote sparsity in the vector of weights $\theta$ \citep{tibshirani1996regression},  i.e., a parsimonious solution. Finally, note that a higher value for $w$ yields higher  sparsity of the optimal solution vector $\theta$ of the optimization problem~\eqref{eq_sparseDenProblem}. 

\subsection{Support Discretization}
\label{sec:discrete}
To solve the estimation problem \eqref{eq_sparseDenProblem} via numerical optimization, we discretize the travel times. This is done by defining disjoint intervals (of the same or variable lengths) in the support of the distribution and associate with each interval a \textit{representative} value (denoted by $\tau_n$ for the $n$th interval, e.g., its midpoint). Each data point is assigned the representative value of the interval it lies in: 
let $\tau$ be a surjective mapping from the continuous interval $[0 , T_{\max}]$ into the discrete set $\{\tau_n\}_{n=0}^{N-1}$, i.e., $\tau$ performs the operation $t \mapsto \tau_n$. In effect, the function $\tau$ takes a continuous travel time $t$ and returns its representative $\tau_n$ in the discrete set.  Consequently, the PDFs $\widehat{f}$ and $\overline{f}$ are approximated by vectors of size $N$, denoted, respectively, by $\widehat{p}$ and $\overline{p}$. 
We thus have for any $t \ge 0$ that
\begin{equation}
	\overline{f}(t) \approx \overline{p}_{\tau(t)} = \sum_{m=0}^{M-1} \theta_m \phi_m(\tau(t)).
\end{equation}

The locations of the $M$ component densities simply constitute a set of travel times, which we denote by $\{t_m\}_{m=0}^{M-1}$; note that these do not necessarily coincide with the discrete support of the distribution. Besides, we will consider mixture components with variable width, so that $\{t_m\}_{m=0}^{M-1}$ may not have $M$ distinct values, i.e., some values coincide (this corresponds to the case of placing multiple mixture components of different width at the same location), and $M>N$ is possible. The \emph{distinct values} in $\{t_m\}_{m=0}^{M-1}$ are taken to be a subset of the discrete support of the distribution $\cup_{m=0}^{M-1} \{t_m\} \subseteq \{\tau_n\}_{n=0}^{N-1}$; denoting the number of distinct values in the set $\{t_m\}_{m=0}^{M-1}$ by $M'$, we have necessarily that $M'\le N$. 
When a single scale parameter is used (see \autoref{sec:kernelChoice}) it holds that $M=M'\le N$.

The $M$ mixture components are further quantized in accordance with the discretization of the support as follows: we define $\phi_{n,m} \equiv c_{n,m} \phi_m(\tau_n)$, where $c_{n,m}$ is a constant that depends on the discretization method, and the $m$-th mixture component function. Similarly, we may quantize the PW by setting $\widehat{p}_n = \alpha_n \widehat{f}(\tau_n)$. In essence, $\alpha_n$ is a measure of the width of the $n$th interval; for example $\alpha_n \equiv \Delta$ for a uniform discretization with step-size $\Delta$ (e.g., we use a  second-by-second uniform discretization in our experiments and set $\alpha_n \equiv 1$). We discuss the issue of mixture component discretization (the selection of $\{c_{n,m}\}$) in detail in \autoref{ssec:discretization} and \autoref{ssec:ml}. Finally, defining the  matrix $\Phi \equiv [\phi_{n,m}] \in \RR_+^{N \times M}$,  
we have
\begin{equation}
	\overline{p} = \Phi \theta
\end{equation}
and we consider, in the following, the (discrete) estimation problem:
\begin{equation}
\underset{\theta \in \Omega}{\mathrm{minimize}} ~~ \frac{1}{2} \big\lVert \widehat{p} - \Phi \theta \big\rVert_2^2 + w \lVert \theta \rVert_1 = \frac{1}{2} \sum_{n=0}^{N-1} \Big( \widehat{p}_n - \sum_{m=0}^{M-1} \phi_{n,m} \theta_m \Big)^2 + w \sum_{m=0}^{M-1} |\theta_m|, \label{eq_LASSO}
\end{equation}
which is known as the (constrained) \emph{Least Absolute Shrinkage and Selection Operator (LASSO)} in the statistics and machine learning literature \citep{tibshirani1996regression}. 

\section{Gamma and Mittag-Leffler Mixtures}
\label{sec:kernelChoice}
In this section, we specialize the estimation problem presented in \autoref{ssec:estimationProblem} to the Gamma density mixture derived in \autoref{sec:TTdistDerivation} and address the drawbacks presented at the end of \autoref{ssec:TTdist}.

\subsection{Boundary Bias}
\label{ssec:boundarybias}
Like most asymmetric densities, the shape of the Gamma density (specifically, its width) depends on both the scale parameter \textit{as well as} the location parameter (e.g., its mean). This change in shape results in what is referred to as \textit{boundary bias} in the statistics literature \citep{chen2000probability} and is addressed by changing the roles of parameter and argument. This is done as follows: to evaluate the probability density at $t$, the model (i) uses a single Gamma PDF with its mode, equal to $(\beta-1)\sigma$, coinciding with $t$, (ii) evaluates the densities of the sample points using this function, and (iii) calculates a weighted sum of these densities.  Effectively, \textit{the roles of parameter and argument are reversed}.   
This prevents bias from mixture densities located near the boundaries.  
Specifically, placement of the mixture densities is done as follows: 
In order to locate the (mode  of the) mixture component at the argument $t$, the location parameter is set so that $(\beta-1)\sigma=t$, i.e., we set $\beta = 1 + \frac{t}{\sigma}$;  hence, for a given scale parameter $\sigma>0$, the $m$-th Gamma density is given by:
\begin{equation}\label{eq:Gamma_kernel_def}
	\phi_m(t) = f_{\Gamma} \Big(t_m; 1+\frac{t}{\sigma}, \sigma \Big).
\end{equation}
The estimated probability (before discretization) is then given by: 
\begin{equation}
	\overline{f}(t) = \sum_{m=0}^{M-1} \theta_m f_{\Gamma} \Big(t_m; 1+\frac{t}{\sigma}, \sigma \Big) = \sum_{m=0}^{M-1} \theta_m \frac{1}{\sigma \Gamma\big( 1 + \frac{t}{\sigma} \big)} \Big(\frac{t_m}{\sigma} \Big)^{\frac{t}{\sigma}} e^{-\frac{t_m}{\sigma}}.
\end{equation}
This mechanism is illustrated in Figure \ref{f2_R1}. 
\begin{figure}[h!]
	\centering
	\resizebox{0.75\textwidth}{!}{%
		\includegraphics{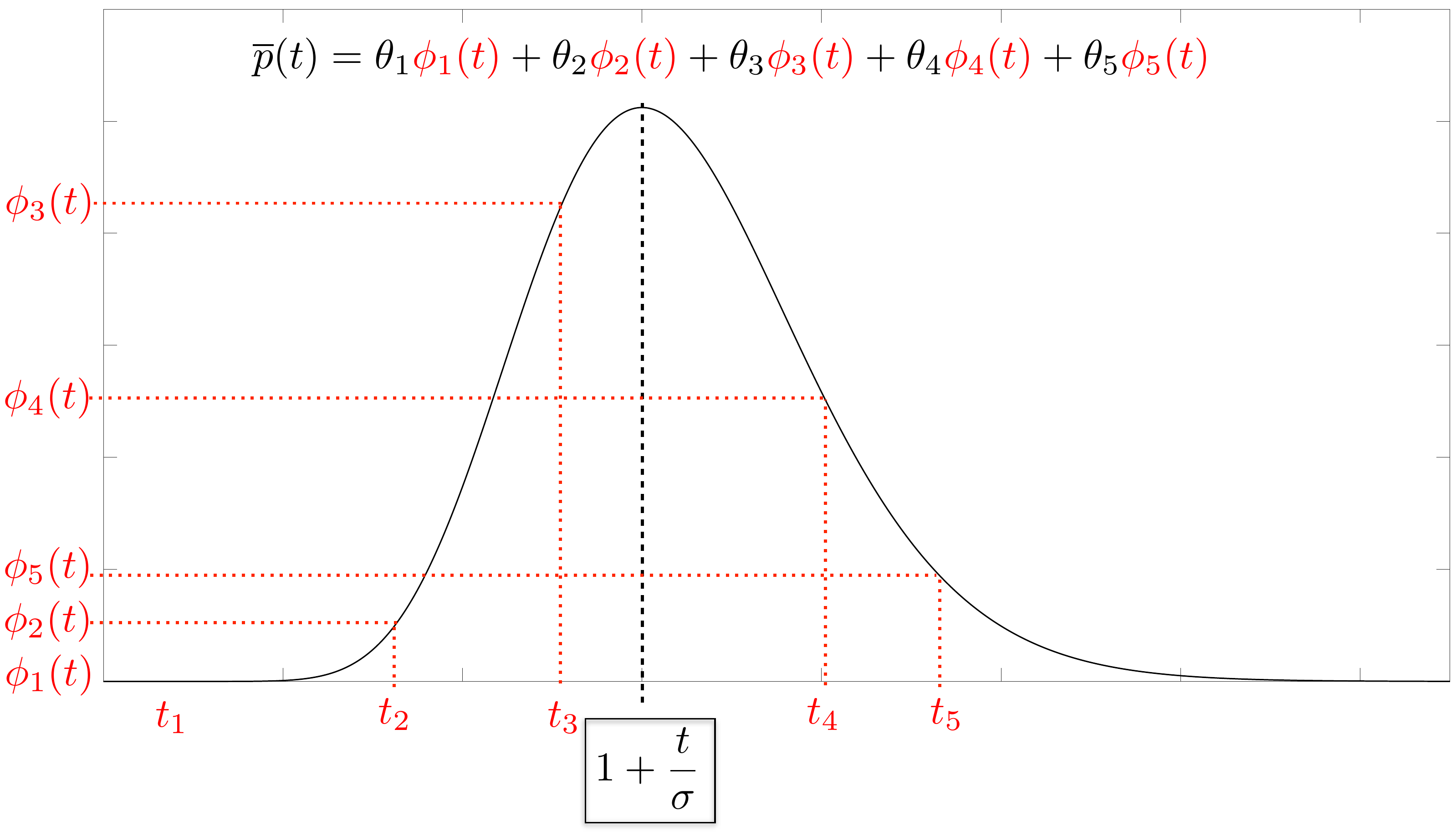}}
	\caption{Mixture of Gamma mixture densities: Gamma probability density function centered at $t$ and evaluated at times $\{t_m\}_{m=1}^5$ with weights $\{\theta_m\}_{m=1}^5$.}
	\label{f2_R1}
\end{figure}

\subsection{Discretization of Gamma Mixture Densities} 
\label{ssec:discretization}
Gamma mixture densities in the statistics literature do not in general integrate to unity.  In other words $\int_0^{\infty} f_{\Gamma}(t_m; 1+\sigma^{-1}t,\sigma) \dd t \ne 1$: this is a consequence of reversing the roles of parameter and argument.  In this paper, we give close attention to this issue  and ensure that our approach guarantees that $\overline{p}$ is a valid probability mass function (PMF).  A similar analysis can be carried out for discretizing $\widehat{p}$.

The standard approach of normalizing $\overline{p}$ as a post-processing step is not applicable in our case.  In standard practice, one only considers goodness-of-fit: given that $\widehat{p}$ sums to unity, it follows that minimizing the distance $\| \widehat{p} - \overline{p} \|_2^2$ should yield a $\overline{p}$ that sums  close enough to unity, so that normalizing by $\sum_{n=0}^{N-1} \overline{p}_n$ will not incur a significant impact on the goodness-of-fit.  In our case, however, we have a trade-off between goodness-of-fit and sparsity (parsimony).  This type of normalization can \emph{substantially affect} the goodness-of-fit for a given sparsity level.  

Therefore, we carefully design the discretization in such a way that the resulting densities sum to unity (approximately). Mathematically, we require that $\sum_{n=0}^{N-1} \overline{p}_n \approx 1$, where the approximation error is kept below a predefined threshold.  Since each vector $\{\phi_{n,m}\}_{n=0}^{N-1}$ is interpreted as a probability distribution, we will first require that $\sum_{n=0}^{N-1} \phi_{n,m} \approx 1$ for all $m\in \{0,\hdots,M-1\}$ (equivalently, we require that $\Phi$ is approximately column-stochastic). We propose a choice for the set of discretization constants, denoted $\{c_n\}_{n=0}^{N-1}$, so that this is indeed the case.

\begin{proposition}[Non-Adaptive Kernel Density]\label{lemNonAdaptive}
	For $\Delta > 0$, define $\tau_n \equiv n\Delta$ and $t_m \equiv m\Delta$, where $n \in \{0,1,\hdots\}$ and $\{m=0,\hdots,M-1\}$.  Define $\widetilde{\Delta} \equiv \frac{\Delta}{\sigma}$ and $\phi_{n,m} \equiv c_n f_{\Gamma} \big(t_m; 1+ n\widetilde{\Delta}, \sigma \big)$ and set $c_n \equiv \sigma$ for all $n$.  Then, there exists $N < \infty$ such that, for any $0 < \varepsilon < 1$, 
	\begin{equation}
	1-\varepsilon \le \sum_{n=0}^{N-1} \phi_{n,m} \le 1.
	\end{equation}
\end{proposition}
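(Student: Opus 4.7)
My plan is to (i) compute $\phi_{n,m}$ in closed form so that it is recognizable as a probability mass in $n$; (ii) use non-negativity and monotonicity of the partial sums for the upper bound of $1$; and (iii) apply a standard tail estimate (Markov, with Chernoff as an upgrade) to produce a cutoff $N$ that works uniformly over the finite index set $m\in\{0,\ldots,M-1\}$ for the lower bound $1-\varepsilon$.

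Substituting $t_m=m\Delta$, $\tau_n=n\Delta$, and $c_n=\sigma$ into
$f_\Gamma(t;\beta,\sigma)=\tfrac{1}{\sigma\Gamma(\beta)}(t/\sigma)^{\beta-1}e^{-t/\sigma}$, the two $\sigma$ factors cancel and I obtain
\[
\phi_{n,m}\;=\;\frac{(m\widetilde{\Delta})^{n\widetilde{\Delta}}\,e^{-m\widetilde{\Delta}}}{\Gamma(1+n\widetilde{\Delta})}.
\]
In the matched ``non-adaptive'' regime $\widetilde{\Delta}=1$ (the pairing of the discretization step with the Gamma scale that the proposition takes its title from, and that is consistent with the $\sigma=1$ derivation in \autoref{ssec:TTdist}), this is exactly the $\mathrm{Poisson}(m)$ probability mass at $n$, and the Taylor expansion $e^{m}=\sum_{n\ge 0}m^n/n!$ immediately gives $\sum_{n=0}^{\infty}\phi_{n,m}=1$.

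Because every $\phi_{n,m}\ge 0$, the partial sum $S_N(m)\equiv\sum_{n=0}^{N-1}\phi_{n,m}$ is non-decreasing in $N$ and is therefore bounded above by its unit infinite limit, giving $S_N(m)\le 1$ for every $N$ and every $m$. For the lower bound, I identify $S_N(m)=\Pr(X_m<N)$ with $X_m\sim\mathrm{Poisson}(m)$ and apply Markov's inequality to get $\Pr(X_m\ge N)\le \EE X_m/N=m/N$. Since $m$ is confined to the finite set $\{0,\ldots,M-1\}$, the single choice $N\equiv\lceil(M-1)/\varepsilon\rceil$ is independent of $m$ and delivers $S_N(m)\ge 1-m/N\ge 1-\varepsilon$ simultaneously for every $m$; combined with the upper bound this yields the claim. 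A Chernoff refinement would give a logarithmic-in-$\varepsilon$ size for $N$ should a sharper bound be desired.

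The main obstacle is the tight upper bound $S_N(m)\le 1$, which depends on the Poisson identification in Step 1; for $\widetilde{\Delta}\neq 1$ the infinite series instead evaluates to $e^{-m\widetilde{\Delta}}E_{\widetilde{\Delta}}((m\widetilde{\Delta})^{\widetilde{\Delta}})$, a Mittag-Leffler value that is generally not equal to unity and may exceed it. I would therefore open the proof by stating the proposition in the matched regime (as indicated by its title and the derivation in \autoref{sec:TTdistDerivation}); if genuinely arbitrary $\Delta/\sigma$ is required, I would couple the Markov tail estimate above with the post-processing renormalization of \ref{A:summability} to absorb the Mittag-Leffler overshoot within the same tolerance $\varepsilon$. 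This reconciliation with the ``for $\Delta>0$'' phrasing is the only step where meeting the proposition as stated demands extra care; the rest is straightforward Poisson bookkeeping.
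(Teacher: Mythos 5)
Your proposal is correct and follows essentially the same route as the paper's proof: set $\widetilde{\Delta}=1$ (i.e., $\sigma=\Delta$), recognize $\{\phi_{n,m}\}_n$ as the $\mathrm{Poisson}(t_m/\sigma)$ probability mass function, bound the partial sums above by their unit limit, and choose $N$ uniformly over $m$ via the worst-case rate $M-1$. The only (minor) difference is that you make $N$ explicit through Markov's inequality whereas the paper defines it as the $(1-\varepsilon)$-percentile of $\mathrm{Poisson}(M-1)$; your observation that the matched regime $\widetilde{\Delta}=1$ is genuinely needed is also consistent with the paper, which imposes it inside the proof.
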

\begin{proof}
Set the scale parameter so that $\widetilde{\Delta} = 1$, i.e., $\sigma \equiv \Delta$, then
\begin{equation}
	\sum_{n=0}^{N-1} \phi_{n,m} =  \sum_{n=0}^{N-1} c_n f_{\Gamma} \big(t_m; 1+ n\widetilde{\Delta}, \sigma \big) = \sum_{n=0}^{N-1} \frac{1}{\Gamma(1 + n\widetilde{\Delta})} \Big(\frac{t_m}{\sigma}\Big)^{n\widetilde{\Delta}} e^{-\frac{t_m}{\sigma}} \label{eq_discreteGammaKernel}
\end{equation}
converges to 1 (from below) as $N \rightarrow \infty$ since the terms inside the sum on the right hand side of \eqref{eq_discreteGammaKernel} take the form of the probability mass function of a Poisson distributed random variable with rate parameter $\frac{t_m}{\sigma}$.  We choose $N$ so that the sum is approximately unity: let $X$ be a Poisson random variable with rate parameter 
\begin{equation}
\underset{\{ t_m \}_{m=0}^{M-1}}{\max} ~ \frac{t_m}{\sigma} = \frac{(M-1)\Delta}{\sigma} =  M-1
\end{equation} 
and choose $N$ as the $(1-\varepsilon)$-percentile point of $X$.  That is
\begin{equation}\label{eq:tailGamma}
N \equiv \min \big\{n:  \PP(X \ge n) \le \varepsilon \big\}.
\end{equation}
This completes the proof. 
\end{proof}

Note that (i) choosing the rate parameter as $\max_{m=0,\hdots,M-1} ~ \frac{t_m}{\sigma}$ renders our choice of $N$ independent of $t_m$ and ensures that the threshold error is not exceeded for any $t_m$; (ii) this is only achievable when $N > M$; and (iii) ensuring that \emph{exactly} $\sum_{n=0}^{N-1} \phi_{n,m} = 1$ for all $j \in \{0,\hdots,M-1\}$ (as opposed to it being approximately equal to unity) can be achieved by re-defining $\phi_{0,m}$ (a constant which depends on $m$) as

\begin{equation}\label{eq:phi0}
\phi_{0,m} \equiv (\varepsilon_m+1) e^{-\frac{t_m}{\sigma}},
\end{equation}
where
\begin{equation}\label{error_eq}
\varepsilon_m = \sum_{n=N}^{\infty} \frac{1}{n!} \Big(\frac{t_m}{\sigma} \Big)^n.
\end{equation}
Note that $\varepsilon_m$ is increasing in $m$; therefore, it is upper-bounded by $\varepsilon_{M-1}$.  

\subsection{Adaptive Model using Mittag-Leffler Functions}\label{ssec:ml}
One drawback of the approach outlined above is the necessity for a single scale parameter $\sigma$.  To allow for varying scale parameters, we generalize the Gamma densities using \emph{Mittag-Leffler} functions.  In this context, the assumption that $\widetilde{\Delta} = 1$ is no longer feasible since $\sigma$ is allowed to vary from one mixture component to another.  To ensure summability to unity, we generalize the Gamma density to one which uses a generalized form of the exponential function.  This can be achieved by replacing $e^{-\frac{t}{\sigma}}$ in \eqref{eq_gammaDen} with the reciprocal of the (scaled) Mittag-Leffler function  \citep{haubold2011mittag}: 
\begin{equation}
	E_{\nu}(t) \equiv \sum_{n=0}^{\infty} \frac{t^n}{\Gamma(1 + n\nu)}.
\end{equation}
Note that the exponential function is a special case of the Mittag-Leffler function obtained when $\nu=1$; i.e., $E_1(t) \equiv e^t$.  
We first generalize \eqref{eq_gammaDen} as follows:
\begin{equation}
	f_{\mathrm{M-L}}(t;\beta,\sigma,\nu) = \frac{1}{\sigma \Gamma(\beta)} \Big(\frac{t}{\sigma}\Big)^{\beta - 1}  \Big[ E_{\nu} \Big( \big(\frac{t}{\sigma}\big)^{\nu} \Big) \Big]^{-1}, \label{eq_MLDen}
\end{equation}
where the parameters $\beta$ and $\sigma$ are the location and scale parameters defined above and the parameter $\nu$ depends on the discretization.  Proposition \ref{lemML} generalizes the summability result in Proposition \ref{lemNonAdaptive} to the adaptive case (i.e., varying scale parameters).  It also proposes a choice for the discretization constants, $\{c_{n,m}\}_{n,m=0}^{\infty,M-1}$, where the discretization varies by mixture component.
\begin{proposition}[Mittag-Leffler Densities]\label{lemML}
	Let $\Delta$ and $\{\tau_n\}_{n=0}^{\infty}$ be as defined in Proposition \ref{lemNonAdaptive}.  Assume the pairs $\{t_m,\sigma_m\}_{m=0}^{M-1}$ are sorted in increasing order and set $c_{n,m} \equiv \sigma_m$ for all $n,m$ pairs. Define $\widetilde{\Delta}_m \equiv \frac{\Delta}{\sigma_m}$ and the Mittag-Leffler densities
	\begin{align}
	\phi_{n,m} \equiv c_{n,m} f_{\mathrm{M-L}}(t_m;1+n\widetilde{\Delta}_m,\sigma_m,\widetilde{\Delta}_m) = \frac{1}{\Gamma(1+n \widetilde{\Delta}_m)} \Big(\frac{t_m}{\sigma_m}\Big)^{n \widetilde{\Delta}_m} \Big[E_{\widetilde{\Delta}_m}\Big(\big(\frac{t_m}{ \sigma_m}\big)^{\widetilde{\Delta}_m}\Big)\Big]^{-1} \label{eq_genGammaDen}
	\end{align}
	for $n = 0, \hdots$ and $m = 0, \hdots, M-1$. Then, there exists $N < \infty$ such that, for any $0 < \varepsilon < 1$, 
	\begin{equation}
	1-\varepsilon \le \sum_{n=0}^{N-1} \phi_{n,m} \le 1.
	\end{equation}
\end{proposition}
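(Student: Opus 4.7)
The plan rests on the observation that by the very definition of the Mittag-Leffler function, each sequence $\{\phi_{n,m}\}_{n\ge 0}$ is, for fixed $m$, a valid probability mass function on $\{0,1,2,\ldots\}$. Unpacking the definition,
\[
E_{\widetilde{\Delta}_m}\Big(\big(t_m/\sigma_m\big)^{\widetilde{\Delta}_m}\Big) = \sum_{n=0}^{\infty} \frac{1}{\Gamma(1+n\widetilde{\Delta}_m)}\big(t_m/\sigma_m\big)^{n\widetilde{\Delta}_m},
\]
so dividing each term of this series by its total yields exactly $\phi_{n,m}$. Consequently $\sum_{n=0}^{\infty}\phi_{n,m}=1$ and each $\phi_{n,m}\ge 0$, which already delivers the upper bound $\sum_{n=0}^{N-1}\phi_{n,m}\le 1$ for free via monotone convergence of partial sums. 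This is the central structural fact and the reason for introducing the Mittag-Leffler generalization in the first place: by replacing $e^{-t/\sigma}$ with the reciprocal Mittag-Leffler factor, normalization is restored even when $\widetilde{\Delta}_m\ne 1$.

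For the lower bound, equivalently $\sum_{n=N}^{\infty}\phi_{n,m}\le\varepsilon$, I would argue as follows. Since $E_{\widetilde{\Delta}_m}$ is entire in its argument for every $\widetilde{\Delta}_m>0$, the series defining $\phi_{n,m}$ is absolutely convergent for each $m$, so the tail $\sum_{n=N}^{\infty}\phi_{n,m}\to 0$ as $N\to\infty$. Fix $\varepsilon\in(0,1)$ and for each $m\in\{0,\ldots,M-1\}$ set $N_m\equiv\min\{n:\sum_{k\ge n}\phi_{k,m}\le \varepsilon\}$; each $N_m$ is finite by the tail convergence just noted. Taking $N\equiv\max_{0\le m\le M-1} N_m$, which is finite because $M$ is, produces a single truncation index valid uniformly in $m$ and establishes the claim.

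A more constructive selection rule---paralleling the Poisson-quantile choice in Proposition~\ref{lemNonAdaptive}---would exploit the sorting assumption on $\{(t_m,\sigma_m)\}$. Since $\phi_{n,m}$ is a Mittag-Leffler analog of the Poisson PMF with effective rate $(t_m/\sigma_m)^{\widetilde{\Delta}_m}$, and its tail grows monotonically with this rate (provable through Stirling's formula applied to $\Gamma(1+n\widetilde{\Delta}_m)$), the slowest-decaying tail should be attained at $m=M-1$. One could therefore take $N$ to be the smallest integer for which the worst-case tail drops below $\varepsilon$, in direct analogy with \eqref{eq:tailGamma}. I expect the main technical subtlety to lie precisely in rigorously justifying this tail-monotonicity in $m$ in the fully asymmetric setting where both $t_m$ and $\sigma_m$ vary; if that comparison becomes cumbersome, the finiteness-of-$M$ argument above already suffices for the existence statement of the proposition, so I would present the uniform-maximum route as the main proof and remark on the explicit worst-case choice as a refinement.
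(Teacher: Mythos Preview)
Your proposal is correct and follows essentially the same route as the paper: both arguments recognize that $\{\phi_{n,m}\}_{n\ge 0}$ is a bona fide probability mass function (the paper names it the generalized hyper-Poisson PMF of \cite{chakraborty2017mittag}, while you unpack the Mittag-Leffler series directly), whence the upper bound is immediate and the lower bound follows from tail convergence together with a uniform choice of $N$ over finitely many $m$. The paper jumps straight to $N=\min\{n:\PP(\widetilde{X}_{M-1}\ge n)\le\varepsilon\}$ via the sorting assumption, which is precisely the refinement you sketch in your final paragraph.
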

\begin{proof}
For each $0 \le m \le M-1$, let $\widetilde{X}_m$ be the generalized hyper-Poisson random variable proposed by \citep{chakraborty2017mittag} and let $p_{\widetilde{X}_m}$ denote its probability mass function with parameters $a_m$ and $b_m$:
\begin{equation}
	\PP(\widetilde{X}_m = n) \equiv p_{\widetilde{X}_m}(n; a_m,b_m) = a_m^n \frac{1}{ \Gamma(1 + nb_m) E_{b_m}(a_m)}. \label{eq_hyperPoisson}
\end{equation}
Set $a_m \equiv \big(t_m/\sigma_m\big)^{\widetilde{\Delta}_m}$ and $b_m \equiv \widetilde{\Delta}_m=\Delta / \sigma_m$.  Then for each $m$, the set $\{\phi_{n,m}\}_{n=0}^{\infty}$ in \eqref{eq_genGammaDen} is a probability mass function of a hyper-Poisson random variable.  Hence, 
\begin{equation}
\underset{N \uparrow \infty}{\lim} \sum_{n=0}^{N-1} \phi_{n,m} = 1.
\end{equation}
Setting 
\begin{align}
N \equiv \min \big\{n:  \PP(\widetilde{X}_m \ge n) \le \varepsilon, \mbox{ for } 0 \le m \le M-1 \big\} 
= \min \big\{n:  \PP(\widetilde{X}_{M-1} \ge n) \le \varepsilon \big\}
\end{align}
completes the proof. 
\end{proof}

We may ensure that $\sum_{n=0}^{N-1} \phi_{n,m} = 1$ \emph{exactly} by re-defining 
\begin{equation}
\phi_{0,m} \equiv (\widetilde{\varepsilon}_m+1) \big[E_{\widetilde{\Delta}_m}\big((t_m / \sigma_m)^{\widetilde{\Delta}_m}\big)\big]^{-1} ,
\end{equation} 
where	
\begin{equation}
\widetilde{\varepsilon}_m = \sum_{n=N}^{\infty} \frac{1}{\Gamma(1+n\widetilde{\Delta}_m)} \Big(\frac{t_m}{\sigma_m} \Big)^{n\widetilde{\Delta}_m}. \label{eq_errorFix}
\end{equation}
Observe that in a Mittag-Leffler (M-L) mixture, multiple mixture densities (of variable scale $\sigma_m$) may be associated with the same travel time $t_m$ (same location parameter); this implies that $M>N$ is possible. Similarly, the set $\{\sigma_m\}_{m=0}^{M-1}$ need not have distinct values.  Nonetheless, the above analysis shows that necessarily $N>M'$, where $M'$ denotes the number of distinct values of the times $\{t_m\}_{m=0}^{M-1}$. 

\section{Numerical Optimization}
\label{sec:numerics}
In what follows, the (constrained) LASSO problem~\eqref{eq_LASSO} is considered by taking $\Omega = \RR^{M}_+$ (the positive orthant) as opposed to $\Omega=\{\theta\in \RR^{M}_+ | \sum_{m=0}^{M-1} \theta_m = 1\}$
(the probability simplex). This is done purposefully for two reasons. First, this choice yields more efficient numerical optimization methods, which is especially important for real-time learning (effectively, the projection to the positive orthant is much simpler than the projection to the simplex, which requires sorting). Second, and more importantly, setting $\Omega=\{\theta\in \RR^{M}_+ | \sum_{m=0}^{M-1} \theta_m = 1\}$ would result in the optimization problem:

\begin{eqnarray*}\label{eq:consLASSOorthant}
	\underset{\theta \ge 0}{\mathrm{minimize}} & \frac{1}{2} \big\lVert \widehat{p} - \Phi \theta \big\rVert_2^2 + w \mathbf{1}^\top\theta\\
	\textrm{subject to} & \mathbf{1}^\top\theta = 1,\nonumber 
\end{eqnarray*}
which is equivalent to 
\begin{eqnarray*}\label{eq:consLASSOorthant1}
	\underset{\theta \ge 0}{\mathrm{minimize}} & \frac{1}{2} \big\lVert \widehat{p} - \Phi \theta \big\rVert_2^2\\
	\textrm{subject to} & \mathbf{1}^\top\theta = 1 \nonumber 
\end{eqnarray*}
since the second term in the objective is determined by the equality constraint ($\mathbf{1}$ is a vector of 1s of size $M$). 
This \emph{leaves no control over sparsity}, since the objective no longer depends on the control parameter $w$. This is clearly an undesirable feature when aiming for parsimonious solutions in a controllable fashion, and justifies our choice of selecting $\Omega = \RR_+^M$ in what follows. 
Ensuring that $\sum_{n=0}^{N-1} \overline{p}_n = 1$ (exactly) can be achieved with a zero-overhead post-processing mechanism.  This strategy is described in \ref{A:summability}.

LASSO~\eqref{eq_LASSO} is a convex problem \citep{boyd2004convex} and there exist a multitude of schemes for solving it numerically. Aside from generic convex solvers such as CVX \citep{cvx}, many numerical optimization methods have been developed: these include applications of the fast proximal gradient method of \cite{nesterov2013gradient} such as \citep{Beck2009,wright2009sparse}, of the Alternating Direction Method of Multipliers (ADMM) \citep{parikh2014proximal} such as~\citep{Afonso2010}, and of interior point methods~\citep{kim2007interior}. Recently, a quasi-Newton solver featuring substantial acceleration for high-accuracy solutions was devised by \citep{sopasakis2016accelerated}. 

In this paper, we consider $\Omega = \RR^M_+$, a constrained LASSO problem (with non-negative weights):
\begin{equation}\label{eq:consLASSO}
	\underset{\theta \in \RR^{M+1}_+}{\mathrm{minimize}} ~~ \frac{1}{2} \big\lVert \widehat{p} - \Phi \theta \big\rVert_2^2 + w \mathbf{1}^\top\theta, 
\end{equation}
which has a \emph{differentiable objective and very simple constraint set}.  For the adaptive case, sparsity can be improved using a scaled regularizer as described in \ref{A:sparsity}.  We implement a fast projected gradient method for this problem and use the log-barrier interior-point method (l1\_ls) based on the analysis in \citep{kim2007interior}. We set a logarithmic barrier for the non-negative constraints as $-\sum_{m=0}^{M}\log(\theta_m)$ and augment the objective function to obtain the associated centering problem 
\begin{equation}
	\underset{\theta \in \RR^{M+1}}{\mathrm{minimize}} ~~ \frac{z}{2} \big\lVert \widehat{p} - \Phi \theta \big\rVert_2^2 + zw \sum_{m=0}^{M} \theta_m  -\sum_{m=0}^{M}\log(\theta_i),
\end{equation}
where the centering problem becomes equivalent to the original as $z\to+\infty$. 

Post-processing methods that are geared towards de-biasing the solution and techniques for selecting the regularization parameter $w$ are presented in \ref{A:postProcessing} and \ref{A:regularization}, respectively.

\section{Recursive Estimation}
\label{sec:recursive}

The sparse density estimation methods that we have presented thus far implicitly assume that the travel times are all available for density estimation purposes. This is an inherent issue with traditional data analysis methods that naturally amount to \emph{offline} data processing . In order to capture real-time variation in travel time (for instance due to recurrent or non-recurrent events), this section presents an efficient online algorithm that operates directly on streaming measurements. Our approach is inspired by and extends \citep{freris2013recursive,RCS_allerton} and \citep{sopasakis2016accelerated} on \emph{recursive compressed sensing}, which applies LASSO to successive overlapping windows of the data stream.  

The key observation is that the dimensionality of our problem $M$ (size of $\theta$) does not depend on the size of the dataset $S$, but depends solely upon the granularity of time discretization (as well as the choice of scale parameters for M-L mixture component densities). For efficient sparse density estimation using streaming data, it is important to devise a method that (i) efficiently updates the Parzen density based on new measurements and (ii) provides fast numerical solutions to the LASSO problem, \eqref{eq:consLASSO}. Satisfying these two requirements ensures that the resulting method is suitable for an online implementation subject to high frequency streaming measurements and  stringent real-time constraints in estimating variable densities. To accomplish the second requirement, we propose using \emph{warm-starting} in solving \eqref{eq:consLASSO}, i.e., we use the previously obtained estimate $\widehat{\theta}$  as a starting point to an iterative LASSO solver (while properly updating the Parzen vector $\widehat{p}$). This is advantageous and leads to a substantial acceleration; see the experiments in \autoref{sec:recursiveTesting}.  We demonstrate how the first requirement can be satisfied by considering two scenarios: (i) \emph{sequential processing of travel times}, i.e., more data become available from the `same' underlying distribution, whence the changes in estimated parameter $\widehat{\theta}$ reflect enhancing the learning outcome based on new data, and  (ii) a \emph{rolling-horizon} setup, in which data are processed via windowing so as to track dynamic (within-day) variability in the travel time distributions in  real-time.  This can also be used for \emph{anomaly detection}, for instance, to identify incidents based on abrupt changes in the travel time distribution. We briefly discuss the two scenarios below. We assume, without any loss in generality, that the online algorithm accepts streaming travel time data and processes the data one observation at a time.

\subsection{Sequential Data Processing}
\label{sec:sequential}
We consider a stream of travel time data $\{T_1,T_2,\hdots\}$ and without loss of generality, we assume that they belong in the set $\{\tau_n\}_{n=0}^{N-1}$.  Sequential processing amounts to learning the underlying mixture densities corresponding to using the first $K+1$ data points based on the estimated mixture using the first $K$ data point, for $K\in \mathbb{Z}_+$. The $K$th LASSO problem is 
\begin{equation}
\widehat{\theta}^{(K)} \in \underset{\theta \in \Omega}{\mathrm{argmin}} ~~ \frac{1}{2} \big\lVert \widehat{p}^{(K)} - \Phi \theta \big\rVert_2^2 + w \mathbf{1}^\top\theta. 
\end{equation}
Observe that the matrix $\Phi$ \emph{does not depend on $K$}, but depends on our choice of time discretization (as well as the scale parameters). As explained above, we use warm-starting to obtain the solution $\widehat{\theta}^{(K+1)}$ while using as starting point to our numerical solver the previous solution $\widehat{\theta}^{(K)}$.
The Parzen density \eqref{eq:Parzen} is recursively updated as follows:
\begin{equation}
\widehat{f}^{(K+1)}(t) = \frac{K}{K+1}\widehat{f}^{(K)}(t) + \frac{1}{K+1} \kappa_h (t-T_{K+1}).
\end{equation}
Since we consider discretized data, the values $\kappa_h (t-t_j)$ can be precomputed for $t\in\{\tau_n\}_{n=0}^{N-1}$ and $j=0,1,\hdots, N-1$. Let us define the matrix $\Psi \in \RR^{N\times N}$ (depending exclusively on time discretization, where the $j$th column of $\Psi$, denoted by $\Psi_j$, is given by:
\begin{equation}\label{eq:P_matrix}
\Psi_j = [\kappa_h(\tau_0-t_j) ~ \hdots ~ \kappa_h(\tau_{N-1}-t_j)]^{\top}.
\end{equation}
Therefore, the vector $\widehat{p}^{(K+1)}$ can be obtained from $\widehat{p}^{(K)}$ along with the new data point $T_{K+1}$ using $O(N)$ operations as follows: 
\begin{equation}
\widehat{p}^{(K+1)} = \frac{K}{K+1}\widehat{p}^{(K)} + \frac{1}{K+1} \Psi_{T_{K+1}},
\end{equation}
where $\Psi_{T_{K+1}}\in \RR^N$ is the column of $\Psi$ corresponding to the (discretized) travel time $T_{K+1}$.

\subsection{Rolling-Horizon Data Processing}
\label{sec:rolling}
This recursive scheme allows real-time streaming data to be incorporated into the model as they arrive, and gradually removes old data that becomes irrelevant. This is achieved by sampling the input stream recursively via overlapping windowing, rather than using all historical data available to learn the model parameters.  This enables  the sparse density model to adapt to changes in the underlying data distribution (due, for example, to within-day variability in traffic conditions). 

We define $\mathbb{T}_W^{(j)}$ to be the $j$th window taken from the streaming travel time data of length $W$.  Without loss of generality, we will assume a fixed window of length $W$, and for a travel time data stream $\{T_1,T_2,\hdots\}$, we define $\mathbb{T}_W^{(j)} \equiv \{T_{j},\hdots,T_{j+W-1}\}$ and, similarly, $\mathbb{T}_W^{(j+1)} \equiv \{T_{j+1},\hdots,T_{j+W}\}$ to be two consecutive windows. 

Denoting the Parzen density corresponding to travel times in $\mathbb{T}_W^{(j)}$ by $\widehat{p}^{(j)}$, learning from the $j$th window amounts to solving
\begin{equation}
\widehat{\theta}^{(j)} \in \underset{\theta \in \Omega}{\mathrm{argmin}} ~~ \frac{1}{2} \big\lVert \widehat{p}^{(j)} - \Phi \theta \big\rVert_2^2 + w \mathbf{1}^\top\theta. 
\end{equation}
Noting the overlap between two consecutive windows, the $\{\widehat{\theta}^{(j)}\}$ sequence of parameters can be estimated recursively: this can be achieved by leveraging the solution obtained from the data in the $j$th window to warm-start 
the iterative solver for LASSO in window $j+1$.  The Parzen density \eqref{eq:Parzen} associated with travel time $t \in \RR_+$ corresponding to the $j$th window is given by 
\begin{equation}
	\widehat{f}^{(j)}(t) = \frac{1}{W} \sum_{l=j}^{j+W-1} \kappa_h (t-t_l).
\end{equation}
Thus, the empirical PW estimator can be viewed as a sliding empirical density estimator with a shifted kernel $\kappa_h(t-T_{j+W})$ being added for every successive window, while the outdated kernel $\kappa_h(t-T_{j})$ is removed, i.e.,
\begin{equation}
\widehat{f}^{(j+1)}(t) = \widehat{f}^{(j)}(t) + \frac{1}{W} [  \kappa_h (t-T_{j+W}) - \kappa_h (t-T_j)].
\end{equation}
Again, the vector $\widehat{p}^{(j+1)}$ can be obtained from $\widehat{p}^{(j)}$ very efficiently using 
$O(N)$ operations, as follows:
\begin{equation}
\widehat{p}^{(j+1)} = \widehat{p}^{(j)}(t) + \frac{1}{W} ( \Psi_{T_{j+W}}  - \Psi_{T_{j}}),
\end{equation}
where again $\Psi_{T_{j+W}},\Psi_{T_{j}}\in \RR^N$ are the columns of $\Psi$ corresponding to the (discretized) travel times $T_{j+W},T_{j}$, respectively.  Owing to the substantial overlap between consecutive data windows, the optimal solution to the $(j+1)$th problem is expected to be close to that of the previous problem. This leads to substantial acceleration in solving successive LASSO problems as demonstrated in \autoref{sec:recursiveTesting}.

\section{Experimental Validation}
\label{sec:testing}

In this section, we present numerical experiments that demonstrate the merits of our methods on real-life datasets. 

\subsection{Numerical Testing}
\label{ssec:numericalTest}
We first tested the performance of the proposed approach on a \emph{synthetic} dataset, using a known bi-modal probability density.  The example we consider compares the performance of a Gaussian mixture and the proposed mixture density using M-L functions. For this example, a data set of $S$ randomly drawn samples was used to construct the density estimate and a separate (out of sample) test data set of size $S_{\mathrm{test}}$ was used to calculate the  \emph{out-of-sample rooted-mean square error} ($\mathrm{RMSE}_{\mathrm{oos}}$) defined by
\begin{equation}
\mathrm{RMSE}_{\mathrm{oos}} = \sqrt{  \frac{1}{S_{\mathrm{test}}} \sum_{j=1}^{S_{\mathrm{test}}} \left( \widehat{f}(t_j) - \overline{f}(t_j)\right)^2  }. 
\end{equation}
The (true) density to be estimated is given by a mixture of two densities:  a Gaussian and a Laplacian with equal weights (0.5):
\begin{equation}
	f(t) = 0.5\frac{1}{\sqrt{200 \pi}} e^{-\frac{(t-60)^2}{200}} + 0.5 \frac{0.2}{2} e^{-0.2 |t-30|}.
\end{equation}
The density estimation was carried out using a data sample of size $S = 2000$, while the error is reported for an \emph{out-of-sample} dataset with $S_{\mathrm{test}} = 10,000$.  For travel times, we considered uniform per-second discretization of the interval $[1,300]s$, i.e., $M'=300$. The scale parameter $\sigma_m$ was allowed ten values $\{1,2,\hdots,10\}$  (therefore $M=10M'=3000$)  for both Gaussian and M-L mixture densities. For both cases, we set 
$N=2M'=600$, corresponding to uniform per-second discretization of the interval $[1,600]$ seconds.
For this example, all computations were performed using Matlab and CVX \citep{cvx} for numerical optimization. The test was performed ten times and average values are reported. 
A representative comparison between the density obtained using our proposed approach (for both Gaussian and M-L kernels), the PW density (using Gaussian kernel with variance $h=1.5$), and the true density is presented in Figure \ref{f1}. 
\begin{figure}[h!]
	\centering
	\resizebox{1.0\textwidth}{!}{%
		\includegraphics{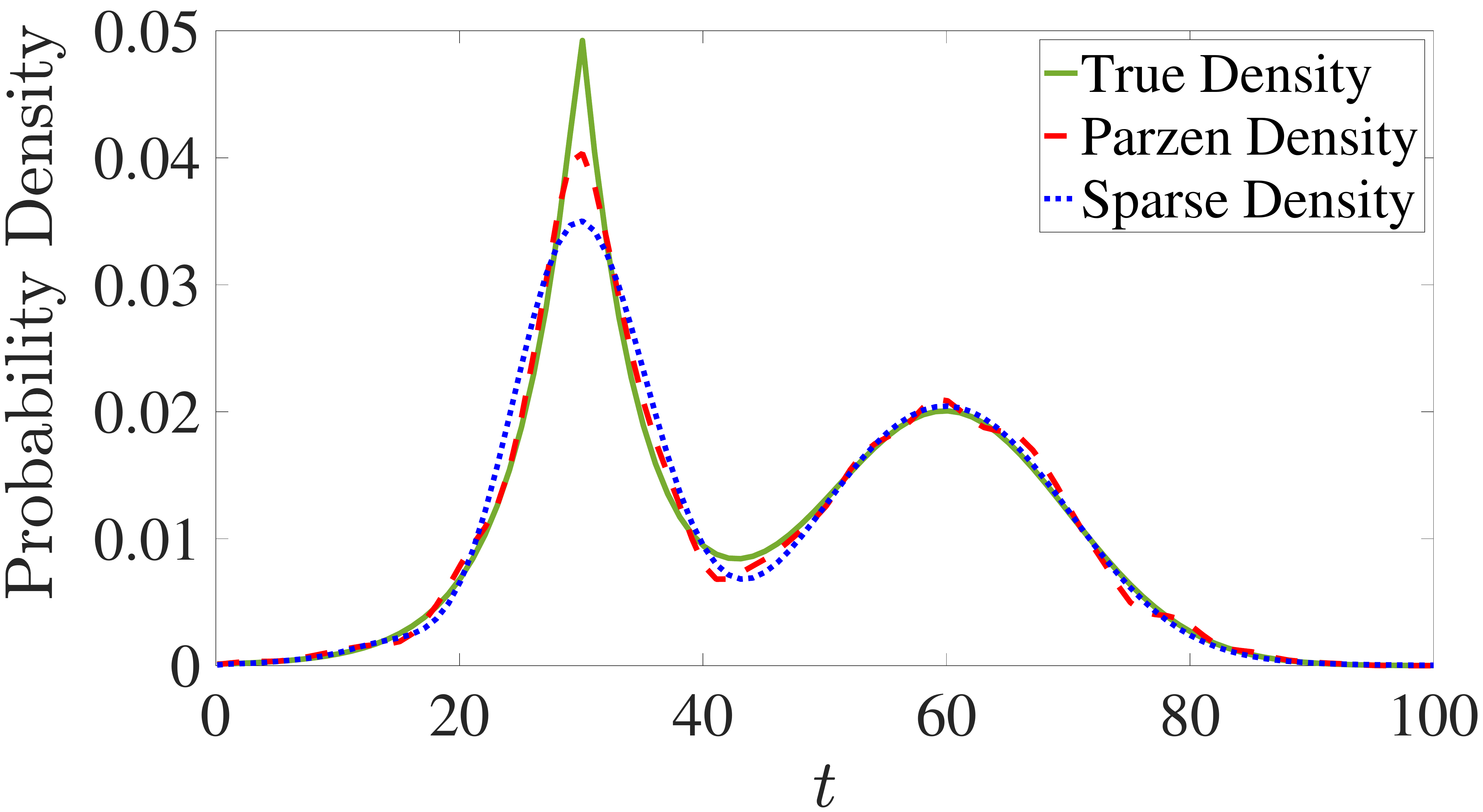}
		\includegraphics{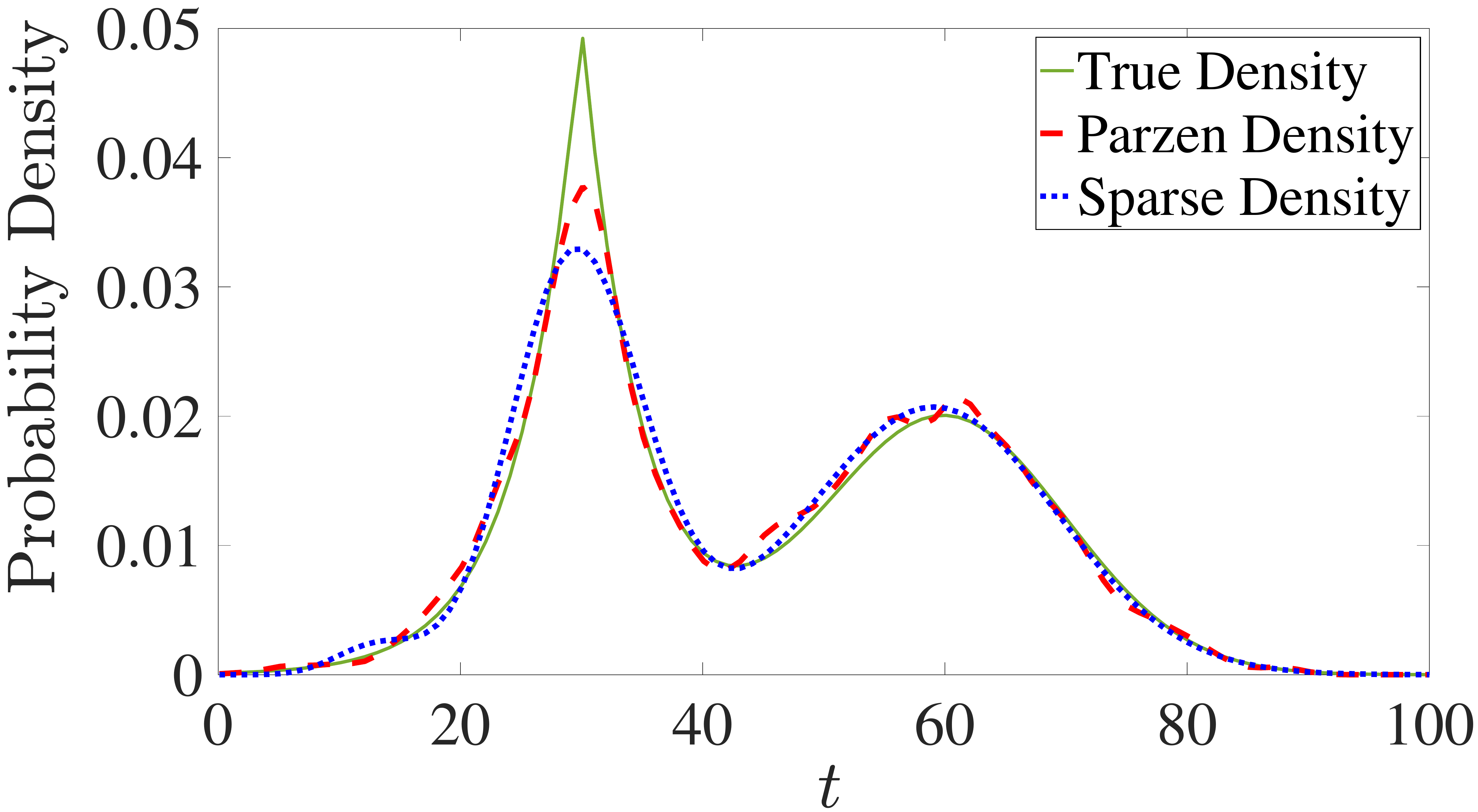}}
	
	(a) \hspace{2.6in} (b)
	
	\caption{True PDF vs. PW PDF vs. sparse mixture density PDF; (a) sparse density with Gaussian mixture component densities, (b) sparse density with M-L mixture component densities.} \label{f1}
	
\end{figure}

From this figure, it is evident that the sparse mixture estimators provide a very good fit to the true distribution, as is also shown in Table \ref{t1} (where $\mathrm{RMSE}_{\mathrm{oos}}$ is reported within $\pm1$ standard deviation).  The achieved sparsity was less than 5\% and 0.4\% of the sample sizes in the case of the Gaussian and M-L cases, respectively. This indicates that using M-L mixture densities promotes higher sparsity than using Gaussian mixture densities, i.e., higher compression rate, while at the same time achieving an order of magnitude improvement in goodness-of-fit, cf. Table \ref{t1}. In fact, the proposed sparse M-L estimator even outperformed PW in terms of accuracy, which perfectly demonstrates the superior fitting capabilities of our model. 
\begin{table}[h!]
	\centering
	\caption{Performance comparison on synthetic data.}
	\label{t1}
	\begin{tabular}[h!]{|ccc|}
		\hline
		Method & $\mathrm{RMSE}_{\mathrm{oos}}$ & Number of mixture components \\ \hline \hline
		PW estimator & 5.50e-04 $\pm$ 9.96e-05 & 2000 \\
		Sparse Gaussian estimator & 3.3e-03 $\pm$ 1.92e-05 & 95 \\
		Sparse M-L Estimator & 4.49e-04 $\pm$ 1.16e-04 & 7 \\ \hline
	\end{tabular}
\end{table}

\subsection{Experiments on Real Datasets}


\subsubsection{Dataset Description}
\label{ssec:realWorld}
The sparse mixture density estimation approach proposed was applied to travel times extracted from vehicle trajectories made available by the Next Generation SIMulation (NGSIM) program Peachtree Street dataset (\url{http://ngsim-community.org}). The arterial section is approximately 640 meters (2100 feet) in length, with five intersections and two or three through lanes in each direction. The section is divided into six intersection-to-intersection segments which are numbered from one to six, running from south to north. Of the five intersections, four are signalized while  intersection 4 is un-signalized.  The Peachtree Street data consists of two 15-minute time periods: 12:45PM to 1:00PM (noon dataset) and 4:00PM to 4:15PM (PM dataset). The dataset includes detailed individual vehicle trajectories with time and location stamps, from which the travel times of individual vehicles on each link were extracted.  In this study, the link travel time is the time a vehicle spends from the instant it enters the arterial link to the instant it passes the stop-bar at the end of the link (i.e., the time spent at intersections is excluded).

The second dataset we used contains vehicle trajectory data collected under the NGSIM program on eastbound I-80 in the San Francisco Bay area in April 2005. The study area is approximately 500 meters in length and consists of six expressway lanes, including a high-occupancy vehicle (HOV) lane and an on-ramp (see \cite{punzo2011assessment} for details). Using seven cameras mounted on top of a 30-story building adjacent to the expressway, a total of 5648 vehicle trajectories were captured on this road section in three 15-minute intervals: 4.00PM to 4.15PM; 5.00PM to 5:15PM; and 5:15PM to 5.30PM. These periods represent the build-up of congestion, the transition between uncongested and congested conditions, and full congestion during the peak period, respectively.

\subsubsection{Fitting Results and Comparisons}
\label{sssec:results}
In order to demonstrate the effectiveness of the proposed approach, we have chosen to estimate the travel time distributions of southbound traffic on the signalized arterial links along Peachtree Street for the two time periods. We used Gaussian component densities for the empirical distribution $\widehat{p}$ (the PW density), where the  bandwidth $h$ was calculated according to the (standard) approximation proposed by \cite{silverman1986density}: $h=1.06\varsigma S^{-1/5}$ is picked to minimize the integral mean-square error (where $\varsigma$ is the sample variance and $S$ is the sample size). For the M-L mixture, we used $M'=300$ location parameters with scale parameters in the set $\sigma_{m} \in \{1,2,3,4,5\}$ (i.e., $M=1500$ mixture components were used in the estimation procedure).

Figure \ref{f3} (a) shows the PW PDF and the estimated sparse PDF (using M-L functions) for the travel times of the southbound vehicles during the noon period. The fitted distribution is clearly bi-modal and closely follows the PW PDF. The bi-modality of the travel time distribution can be attributed to the presence of two traffic states:  non-stopped vehicles along the entire corridor in the southbound direction and stopped vehicles experiencing delay at one or more of the signals. Observe that while the number of mixture components required to calculate the PW density is equal to the number of data samples (58 for this case), the proposed estimation algorithm achieves a similar accuracy with a \emph{much sparser representation}: only four M-L mixture components were needed; i.e., a compression rate of about 15:1.  

We compared our approach against the \emph{Expectation Maximization (EM)} algorithm~\citep{bishop2006pattern}, the prevalent method for estimation of Gaussian mixture models~\citep{wan2014prediction}. 
The EM algorithm (using Gaussian mixtures) has been widely used for the estimation of travel time densities,  despite its slow rate of convergence \citep{wu1983convergence,archambeau2003convergence}, and the dependence of the parameter estimates on the choice of the initial values \citep{biernacki2003choosing}. The commonly adopted method to prevent the EM algorithm from getting trapped in local minima is to start the algorithm with different initial random guesses \citep{wan2014prediction}. The importance of properly defining the stopping criterion to ensure that the parameters converge to the global maximum of the likelihood function has been highlighted in \citep{karlis2003choosing,abbi2008analysis}. In all our experiments, we used ten randomly selected initial estimates; for termination criterion, we used tolerance threshold (selected as $10^{-3}$) on the absolute difference between two successive root-mean squared error (RMSE) estimates, where  
\begin{equation}
\mathrm{RMSE}= \sqrt{  \frac{1}{N} \sum_{j=1}^{N} \left( \widehat{p}(t_j) - \overline{p}(t_j)\right)^2  }. 
\end{equation}

A known issue with the EM algorithm is that it requires predetermining the number of mixture components. This is in contrast to our method, which \emph{optimally determines the number of mixture components concurrently with the fitting procedure}. Given the number of mixture components, the EM algorithm is an iterative  method used to estimate the mean and variance of each Gaussian mixture density, along with the weight vector $\theta$. \emph{Note that the EM algorithm solves for maximum-likelihood estimates of the mixture distribution parameters; it does not minimize the RMSE}. Figure \ref{f3} and Table \ref{t2} summarize the results. 
\begin{figure}[h!]	
	\centering
	\resizebox{1.0\textwidth}{!}{%
		\includegraphics{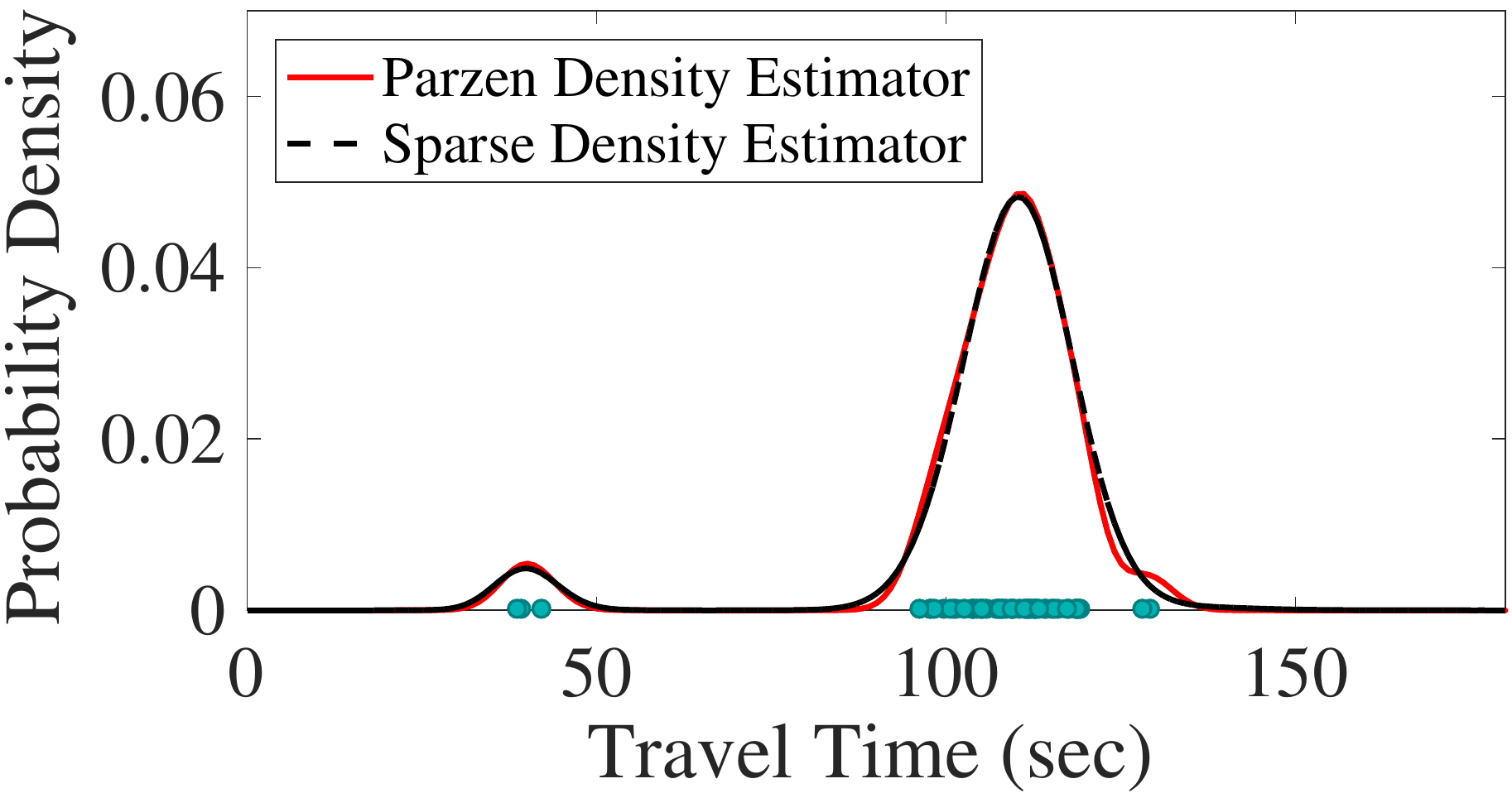} \hspace{0.3 in}
		\includegraphics{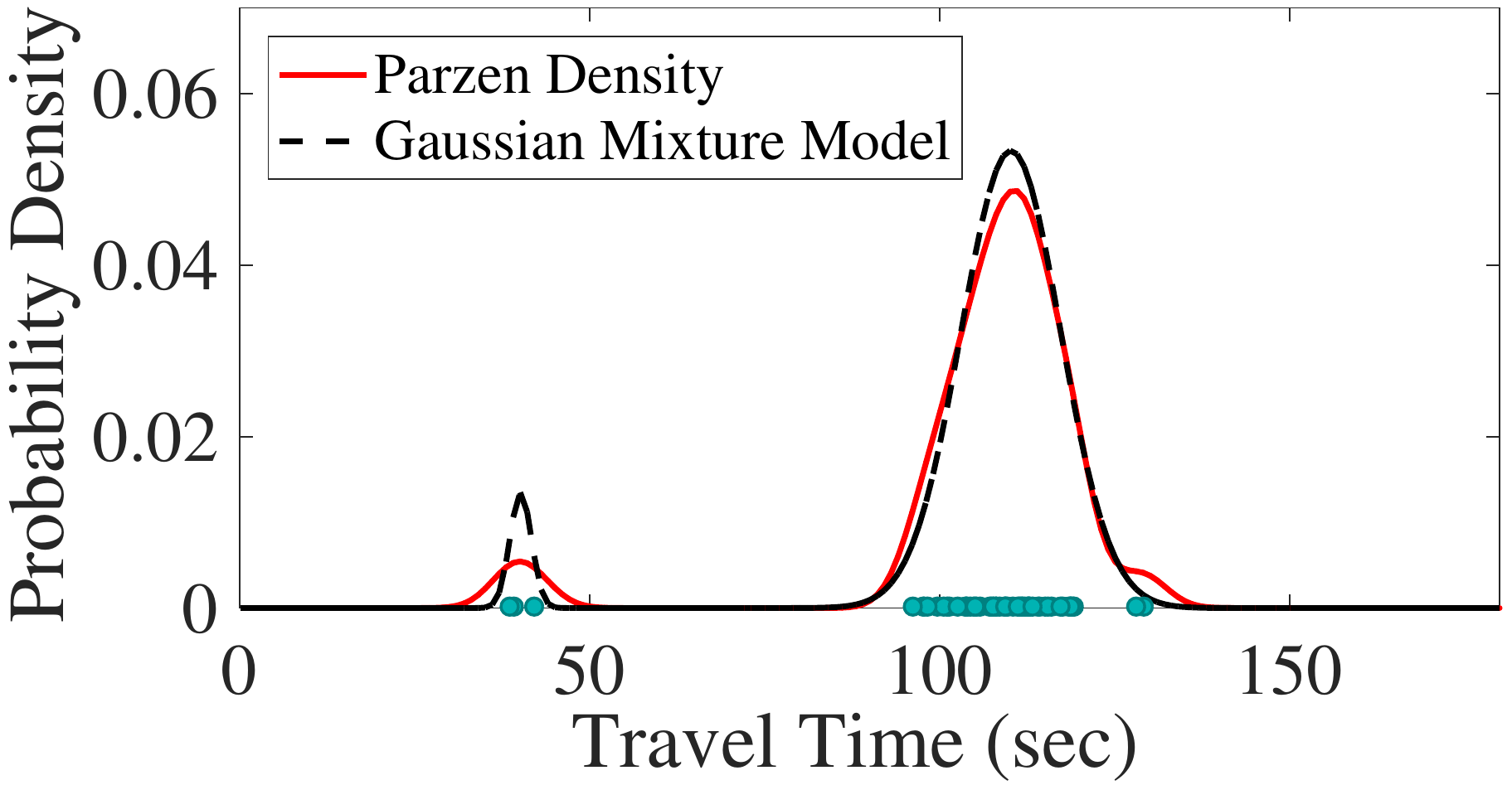}}
	
	(a) \hspace{2.5in} (b)
	
	\resizebox{1.0\textwidth}{!}{%
		\includegraphics{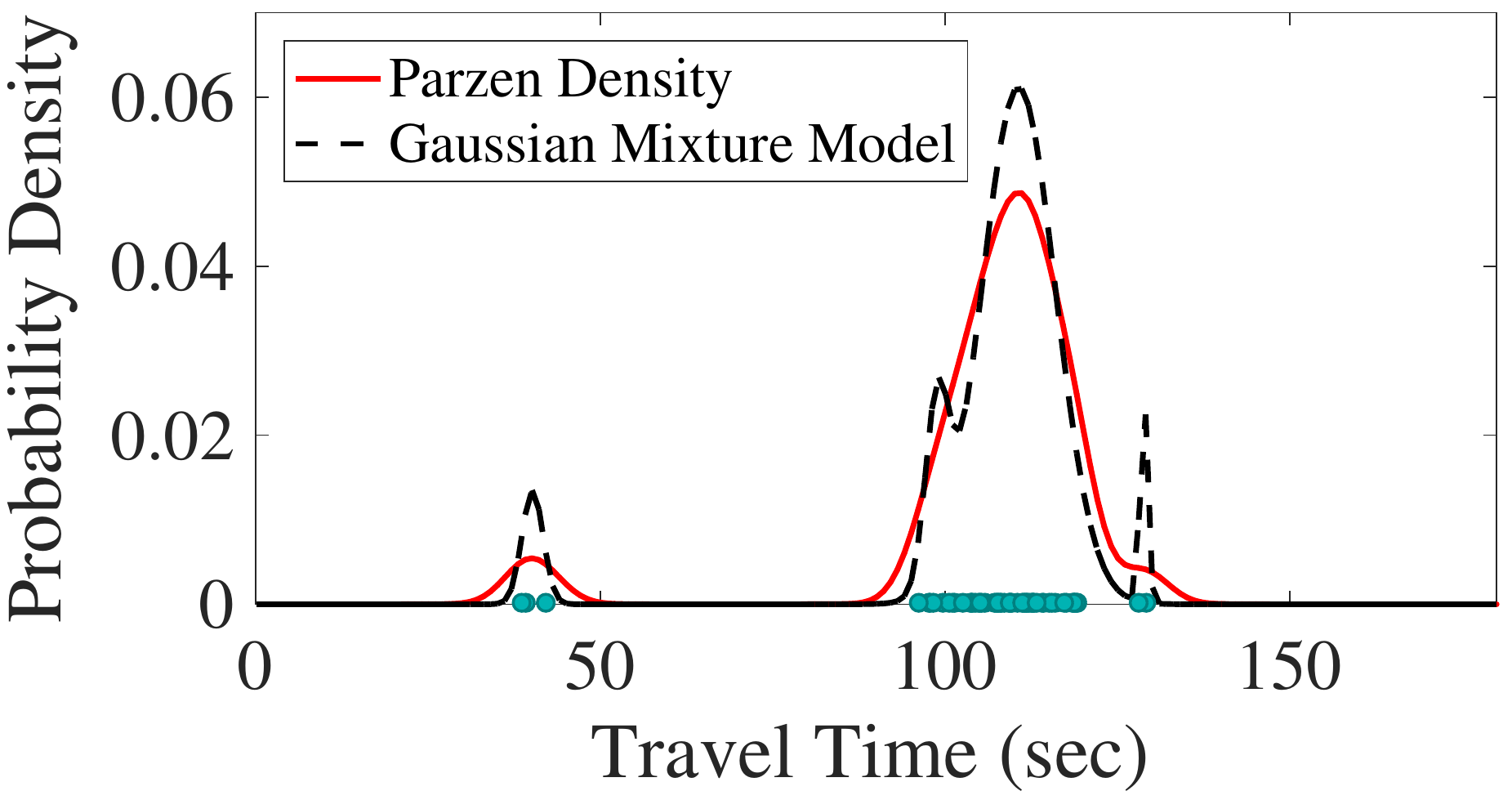} \hspace{0.3 in}
		\includegraphics{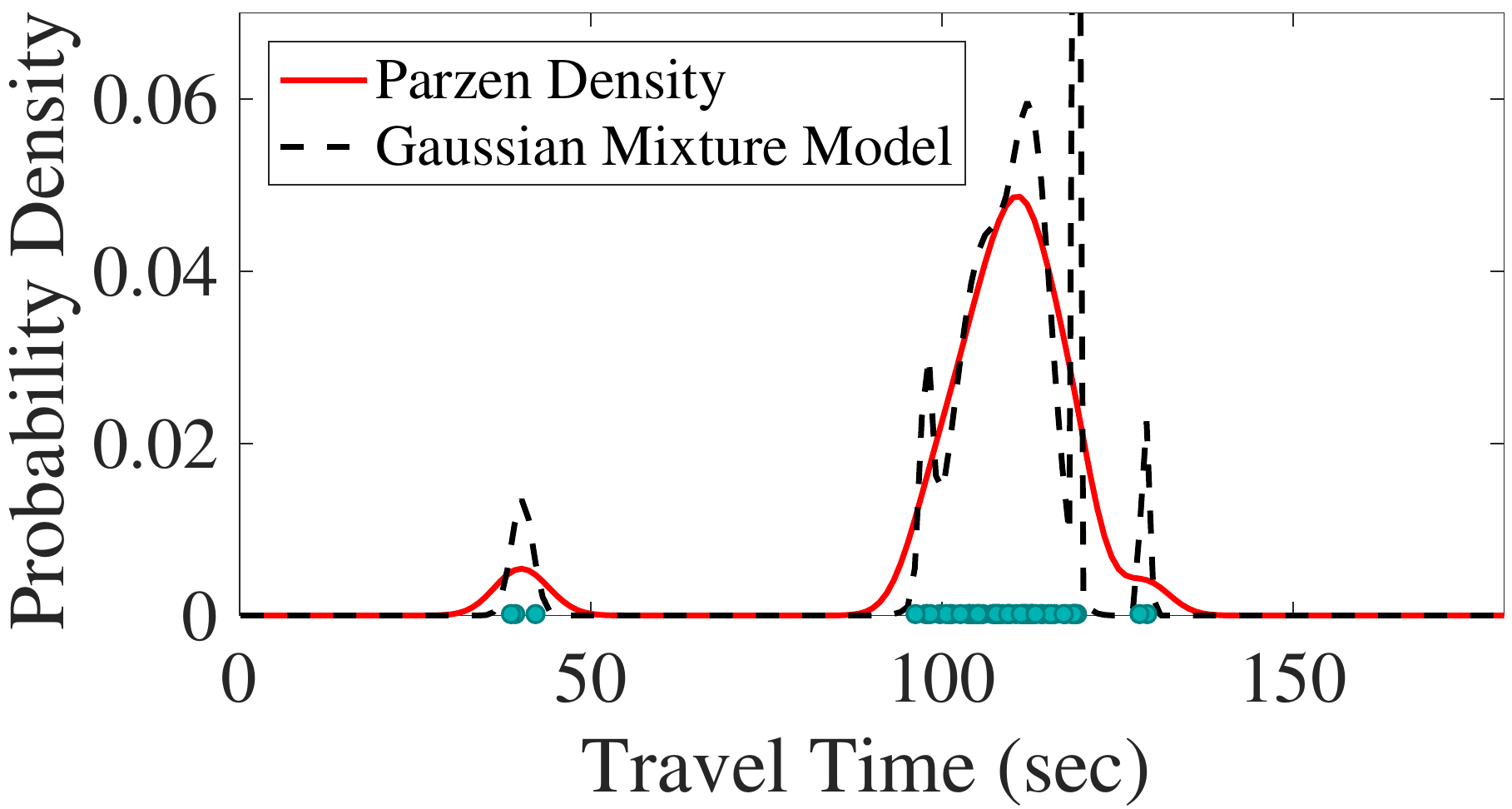}}
	
	(c) \hspace{2.5in} (d)
	
	\caption{Travel time densities of Peachtree Street (southbound, noon) depicting the locations of the travel time samples (green circles along the horizontal axis); (a) M-L mixture densities vs. Parzen density, (b) Gaussian mixture with two modes vs. Parzen density, (c) Gaussian mixture with four modes vs. Parzen density, (d) Gaussian mixture with six modes vs. Parzen density.} \label{f3}
\end{figure}
\begin{table}[h!]
	\centering
	\caption{Performance comparison: M-L vs. EM.}
	\label{t2}
	\begin{tabular}[h!]{|cccc|}
		\hline
		Method &  No. mixture components & RMSE & Log-likelihood \\ \hline \hline
		Sparse M-L Estimator & 4 & 0.0004  & N/A \\
		EM & 2 & 0.0009 & 0.0021  \\
		EM & 4 & 0.0012 & 0.0029 \\
		EM & 6 & 0.0063 & 0.0152 \\ \hline
	\end{tabular}
\end{table}
The optimal sparse fitting contains four M-L mixture components and we also tested the EM algorithm with two, four and six Gaussian mixture components. Increasing the number of components in the EM algorithm increases (i.e., improves) the log-likelihood but the RMSE tends to get worse beyond two mixture components.  This is indicative of the EM algorithm's tendency to \emph{over-fit to artifacts in the data} with larger numbers of mixture components.  This is indicative of a susceptibility to data errors of the EM algorithm. (This is a well-known weakness of log-likelihood maximization as opposed to least-squares estimation.)  In contrast, our model has the favorable property that the goodness-of-fit typically \emph{increases} with the number of mixture components used.  Figure \ref{f2_R2} illustrates this using travel times from another dataset (namely, I-80): we evaluated the RMSE for our sparse density estimator vs. the EM algorithm with varying numbers of mixture components (for M-L component densities, we varied the regularizing parameter $w$ so as to achieve different sparsity levels).
\begin{figure}[h!]
	\centering
	\resizebox{0.5\textwidth}{!}{%
		\includegraphics{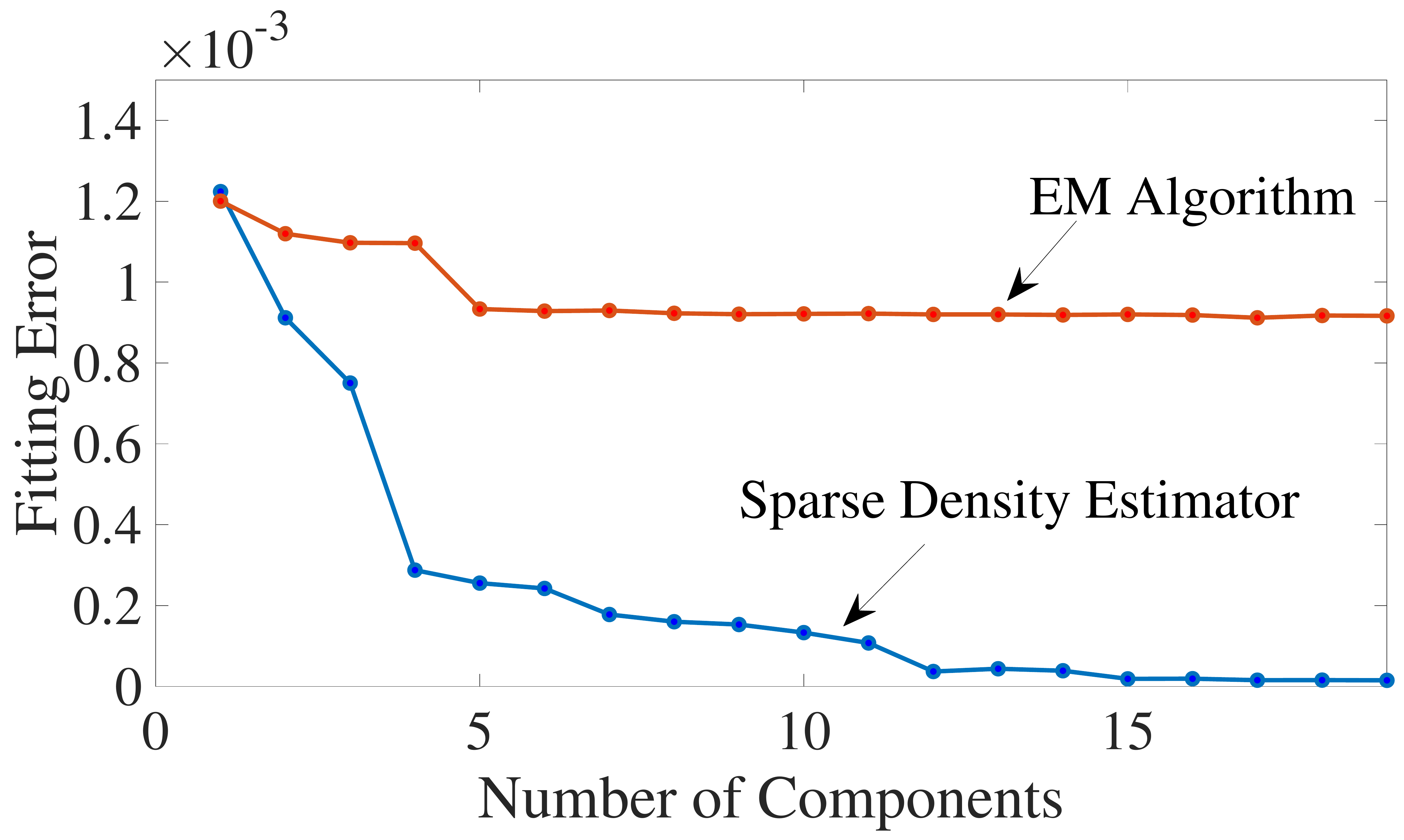}}
	\caption{Fitting accuracy of Sparse Density Estimator vs EM
		for variable number of components; dataset: I-80.} \label{f2_R2}
\end{figure}

\subsection{Inference with Parsimonious Models}
In order to highlight the predictive capabilities and interpretability of parsimonious models, we have tested our method on \emph{hold-out} real data from the I-80 dataset:  We divided the bulk of the I-80 data in two parts (corresponding to different timestamps ): (i) a training dataset and (ii) a hold-out test dataset (where we selected a ratio of $4:1$ for training vs. test data). We then fit our model using the training data and tested its performance (measured via goodness-of-fit) on the hold-out test data. It is worth noting that this scenario is a challenging one due to the heterogeneity of the travel times recorded over intervals of variable traffic conditions. The results are reported in Figure \ref{fig1}: 
Figure \ref{fig1} (a) plots the PW on the training and hold-out data, along with the sparse density obtained using M-L mixture densities (12 mixture components were used by our sparse density estimator in this case); Figure \ref{fig1} (b) plots the fitting error (RMSE) for both our method and the EM algorithm using a varying number of mixture components, namely 1-12. It is evident from this experiment that our method clearly outperformed the EM algorithm in terms of higher fitting accuracy on hold-out data.
\begin{figure}[h!]
	\centering
	
	\resizebox{1.0\textwidth}{!}{%
		\includegraphics{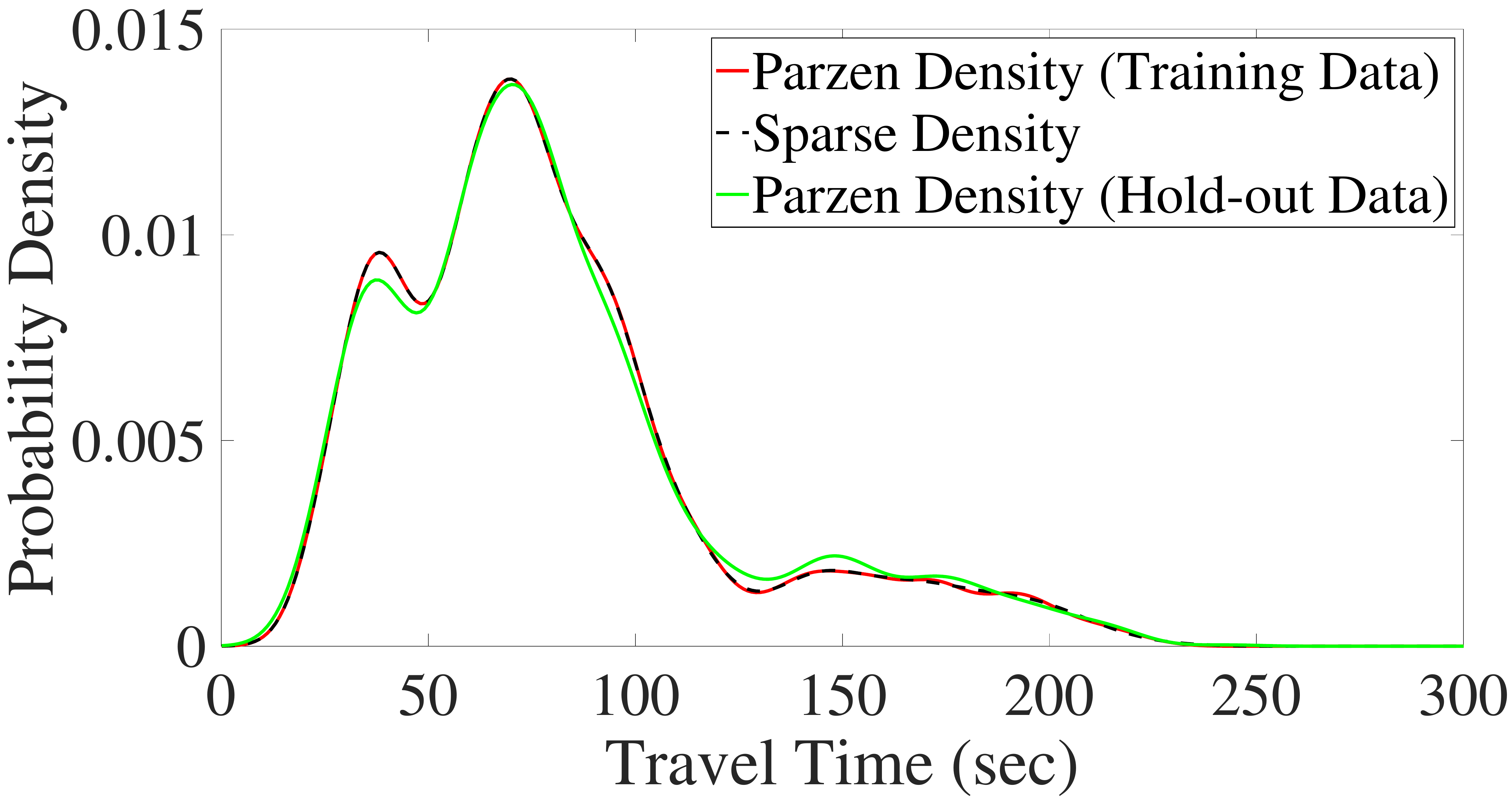}
		\includegraphics{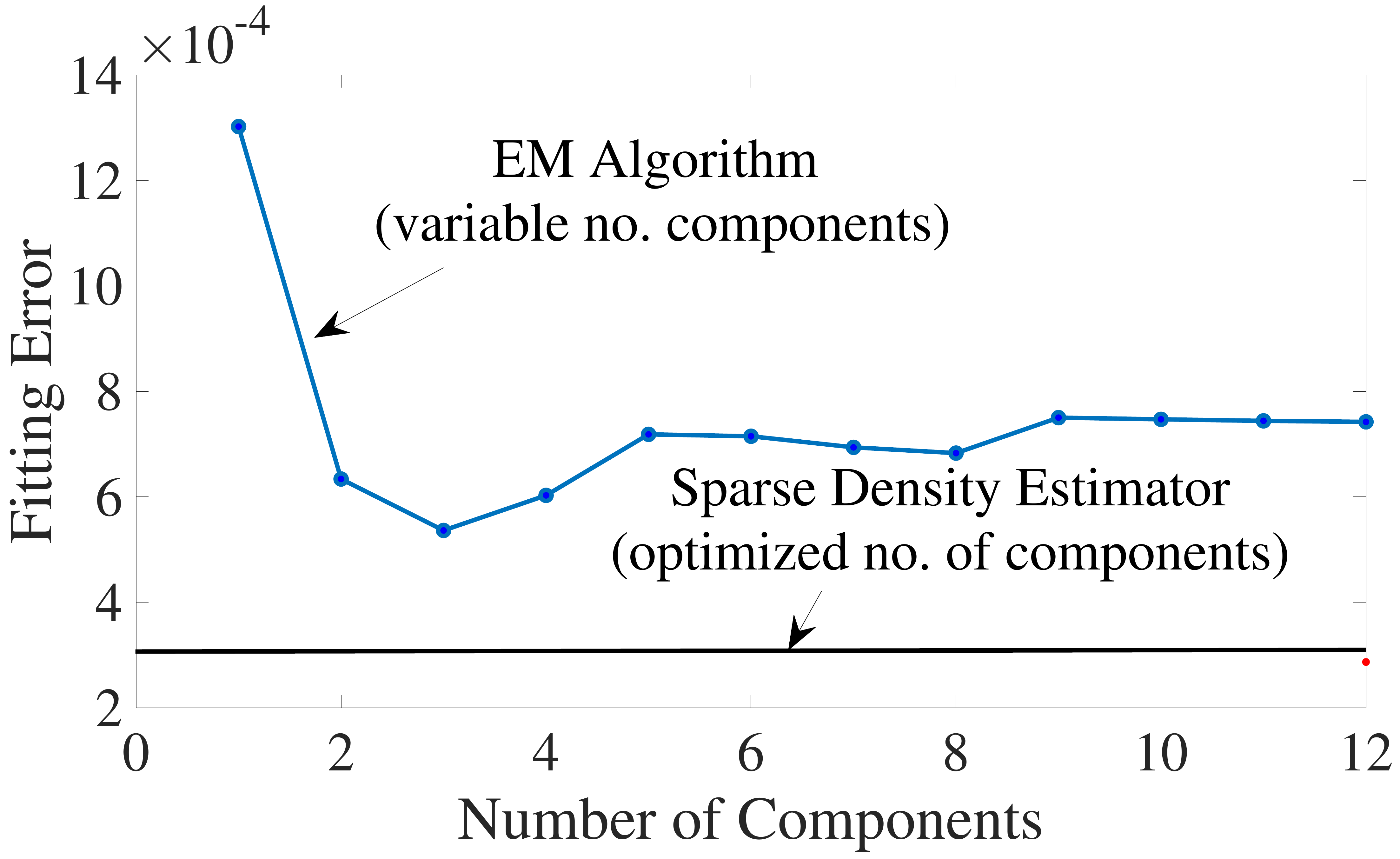}}	
	
	(a) \hspace{2.6 in} (b)
	
	
	
	\caption{(a) Parzen density for training data, hold-out data, and fitted sparse density, (b) Fitting error on hold-out data: M-L sparse density estimator (straight line in black) vs EM algorithm for variable number of mixture components (blue); dataset: I-80.} \label{fig1} 
\end{figure}


We tested our method vs. $\ell_2-$regularization on the Peachtree (northbound, noon) dataset. For both methods, we chose  $M=1500$ M-L mixture components for model selection ($M'=300$ and a scale parameter set $\sigma_{m} \in \{0.2,0.3,0.5,1,1.5\}$). For $\ell_2$-regularization, the value $\widetilde{w}$ was selected from the set $\{5\cdot10^{-5}, 5\cdot10^{-4}, 5\cdot10^{-3}, 5\cdot10^{-2}, 5\cdot10^{-1}\}$ by $5:1$ cross-validation. 
\begin{figure}[h!]
	\centering
	\resizebox{1.0\textwidth}{!}{%
		\includegraphics{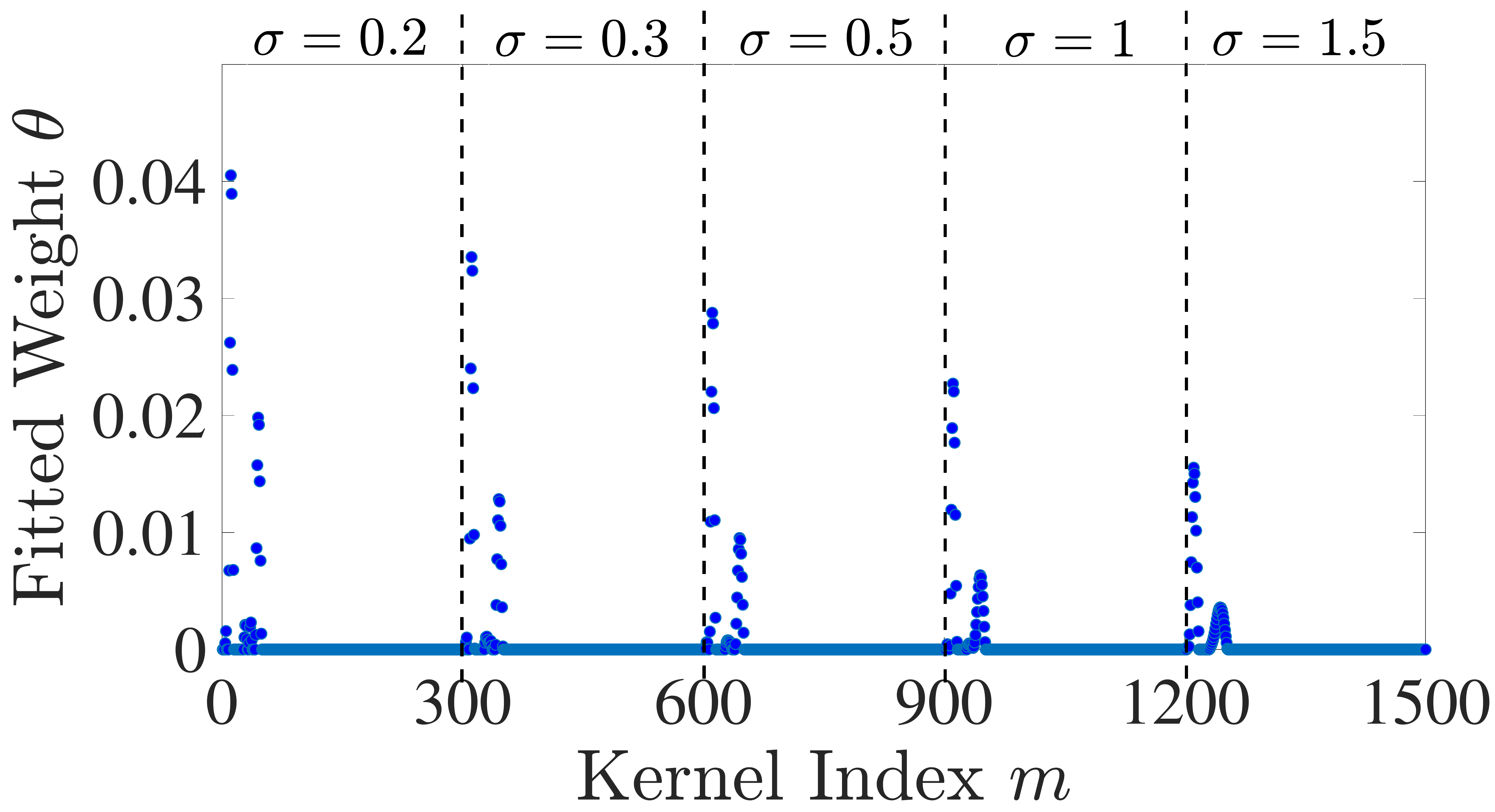}
		\includegraphics{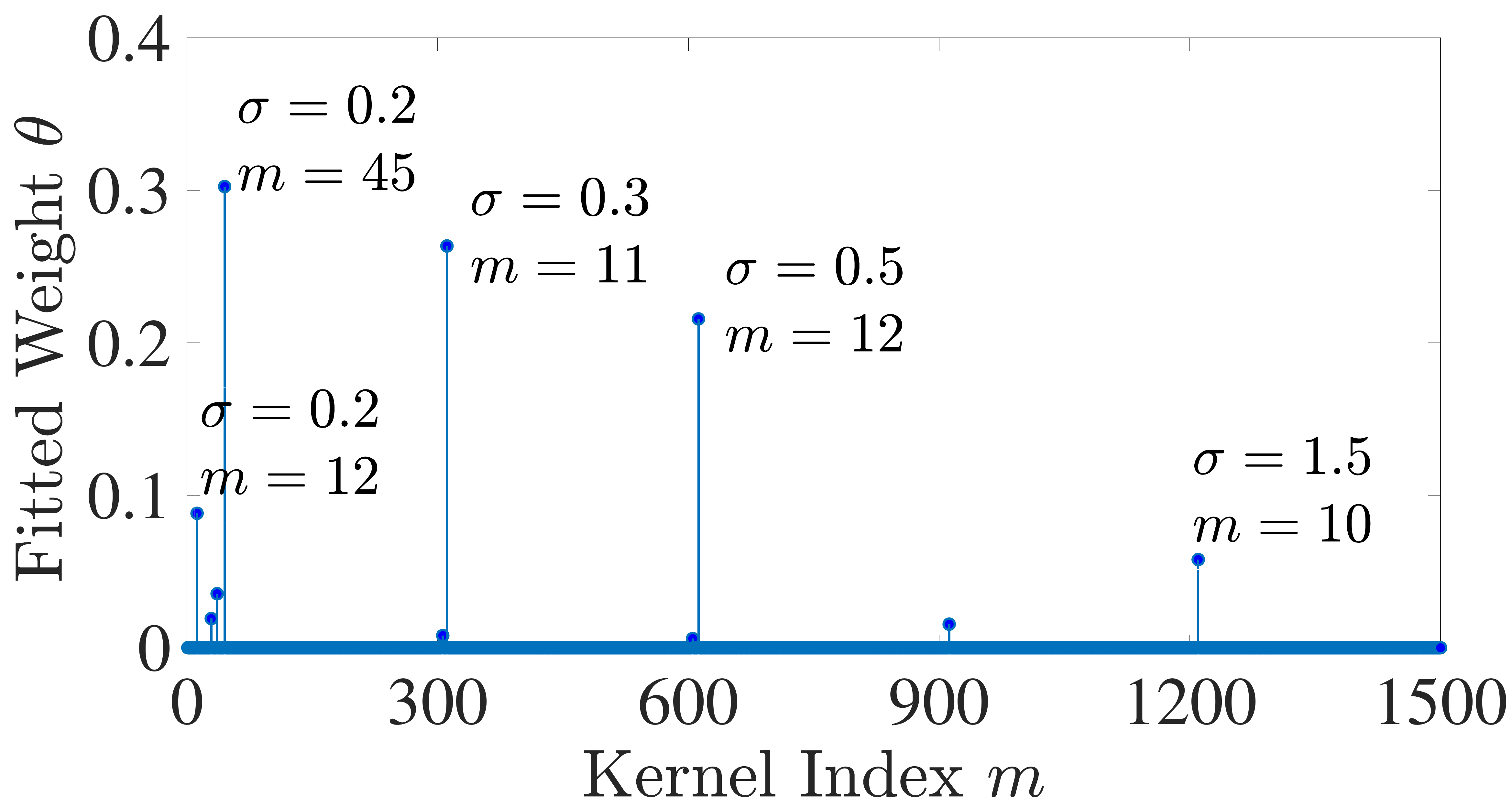}}
	
	(a) \hspace{2.6 in} (b)
	
	
	
	\caption{Weight vector $\theta$  (a) using  $\ell_2$-regularization (b) using sparse density estimation ($\ell_1-$regularization); dataset: Peachtree (northbound, noon).} \label{fig3}
\end{figure}
Figure \ref{fig3} illustrates the results.  Both methods achieved an RMSE of about $0.008$. Nonetheless, the number of mixture components (and corresponding weights) that need to be stored to re-create and predict the travel time distribution was substantially reduced to only 5 M-L mixture densities using sparse density estimation (from 84 needed for $\ell_2-$ regularization).  
In addition to reduced storage requirements, the sparse density estimate allows for making inference with ease about the underlying data through the selected mixture components and their corresponding weights. For instance, the selected M-L components indicate that the underlying travel time data can be approximated well by two peaks located at around $t=11$ and $t=45$. On the other hand, the mixture components selected by the $\ell_2$-norm regularization are much less informative. This parsimony is further illustrated in Figure \ref{f4} where the experiment was conducted on the I-80 dataset.
\begin{figure}[h!]
	\centering
	\resizebox{0.5\textwidth}{!}{%
		\includegraphics{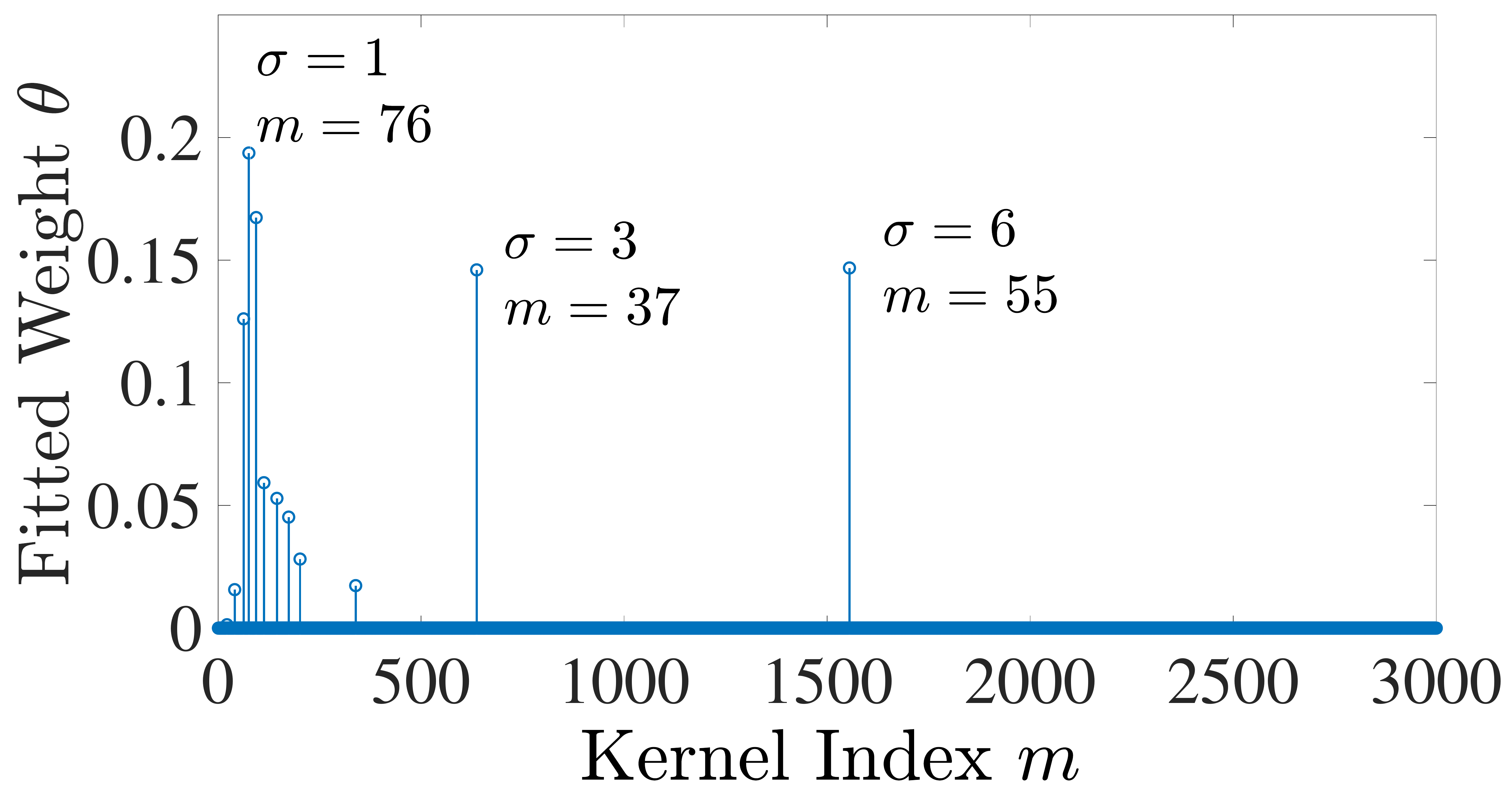}}	
	\caption{Sparse density estimation, I-80.} \label{f4}

\end{figure}

\subsection{Merits of Mittag-Leffler Mixture Densities}
In this section, we demonstrate the superiority of the adaptive approach with M-L mixture densities over the non-adaptive (Gamma mixture densities with a single-scale parameter $\sigma$) in terms of parsimony. For this case study, we considered the travel time distribution of the northbound traffic along Peachtree street in the noon time period. The sparse density estimation was first carried out using the M-L mixture densities with $\sigma_m \in \{1,2,3,4,5\}$ and then using Gamma mixture densities with single parameter  $\sigma = 1$. The solutions are depicted in Figure \ref{fig6_R1}(a) and Figure \ref{fig6_R1}(b) respectively, where we have used $M= 1500$ ($M'=300$ uniform per-second discretization)  for the M-L mixture densities and $M=300$ for the Gamma mixture densities. 
\begin{figure}[h!]
	\centering
	\resizebox{1.0\textwidth}{!}{%
		\includegraphics{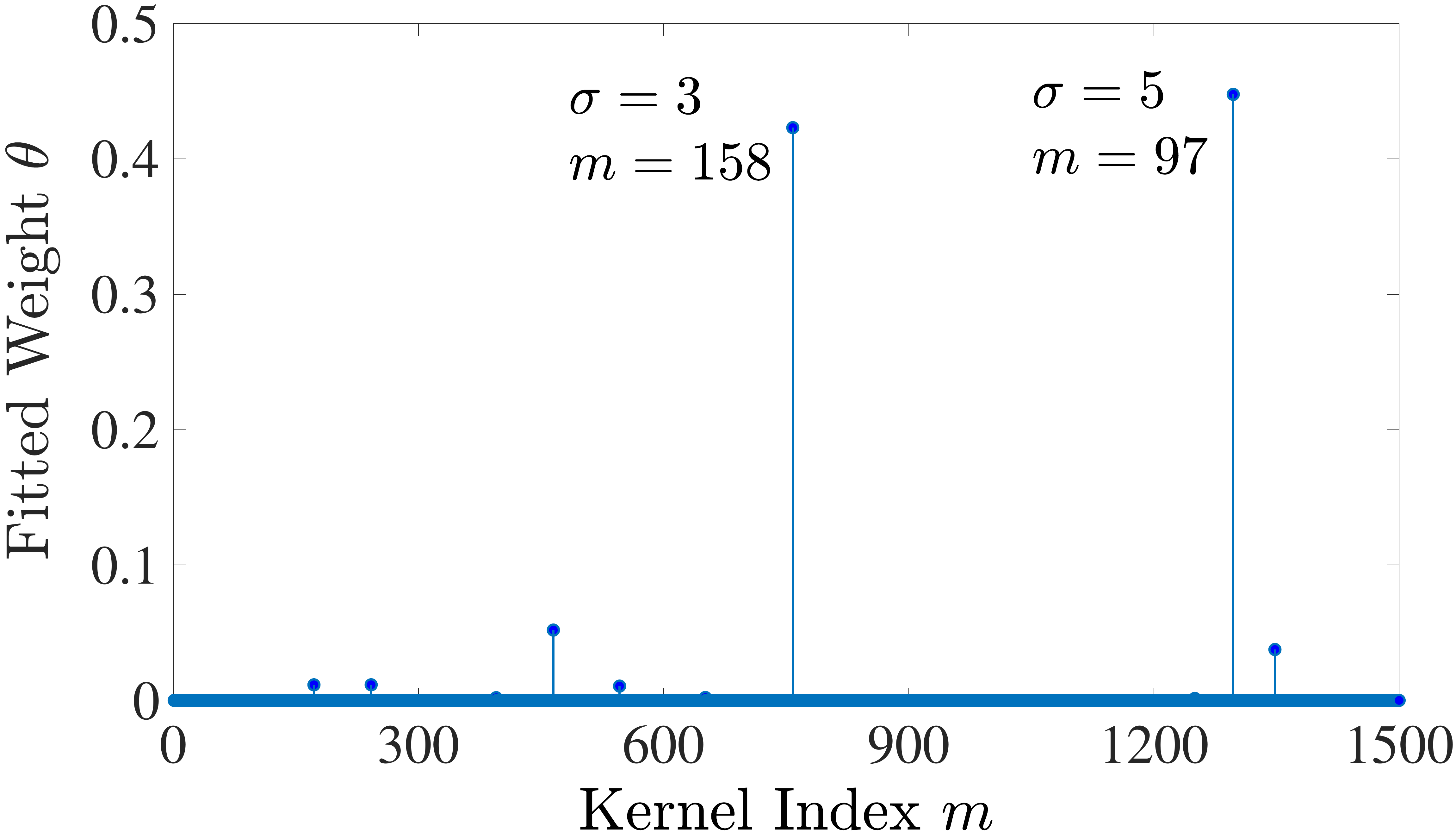}
		\includegraphics{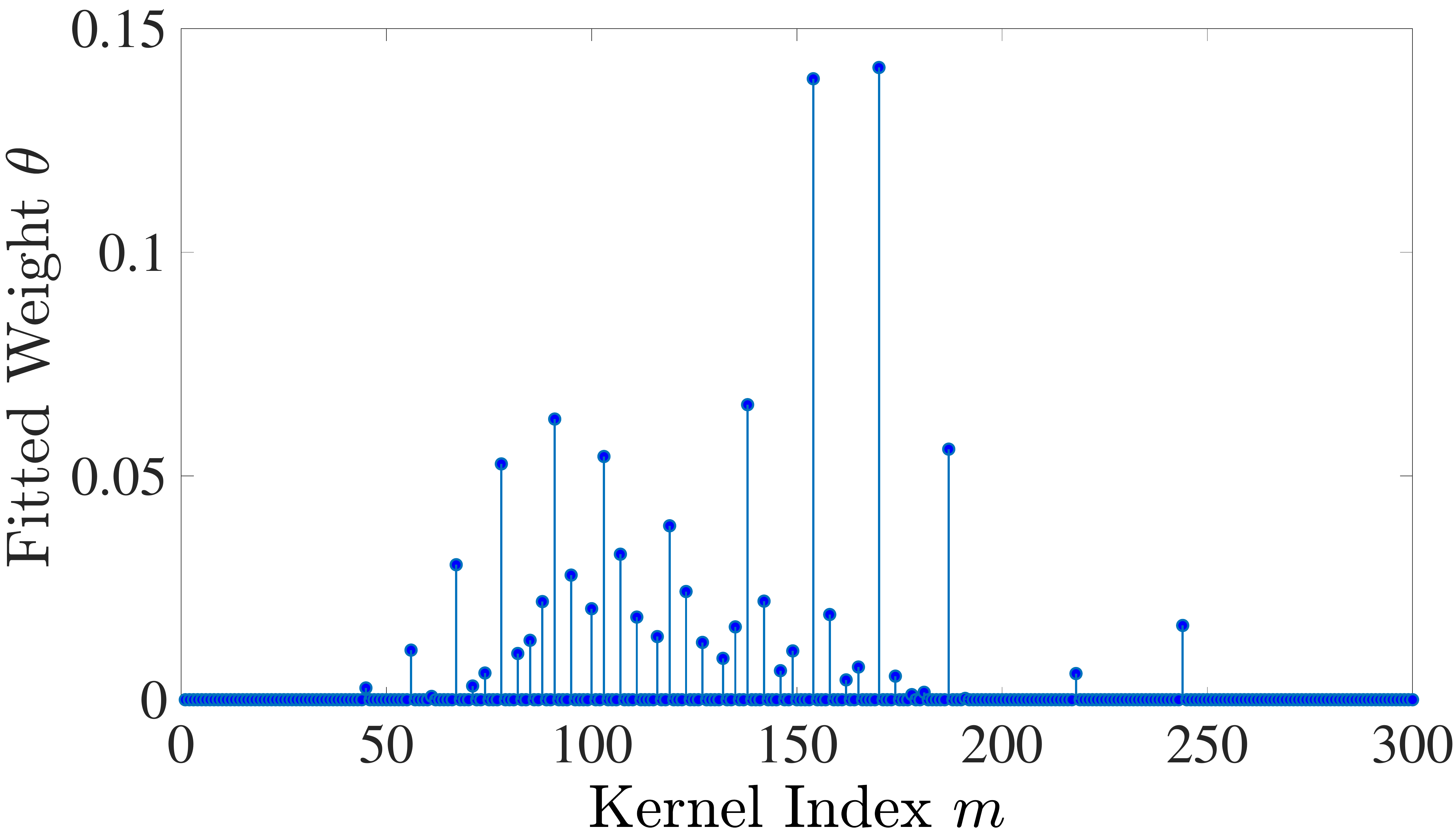}}
	
	(a) \hspace{2.6 in} (b)
	
	
	
	\caption{Weight vector $\theta$ for (a) M-L mixture densities (b) Gamma mixture densities; dataset:  Peachtree (northbound, noon).} \label{fig6_R1}
\end{figure}
The figures  indicate that the travel time density can be efficiently represented using two dominant modes (with different scale parameters). However, in the case of the Gamma mixture, a much larger number of components was required. Although using a $\sigma = 5$ reduces the number of Gamma mixture components required to 2, the sparse Gamma estimate cannot accurately capture the shape of the distribution, as shown in Figure \ref{fig5}(a); in contrast, the estimated M-L mixture is indistinguishable from the PW density, as depicted in Figure \ref{fig5}(b). 
\begin{figure}[h!]
	\centering
	\resizebox{1.0\textwidth}{!}{%
		\includegraphics{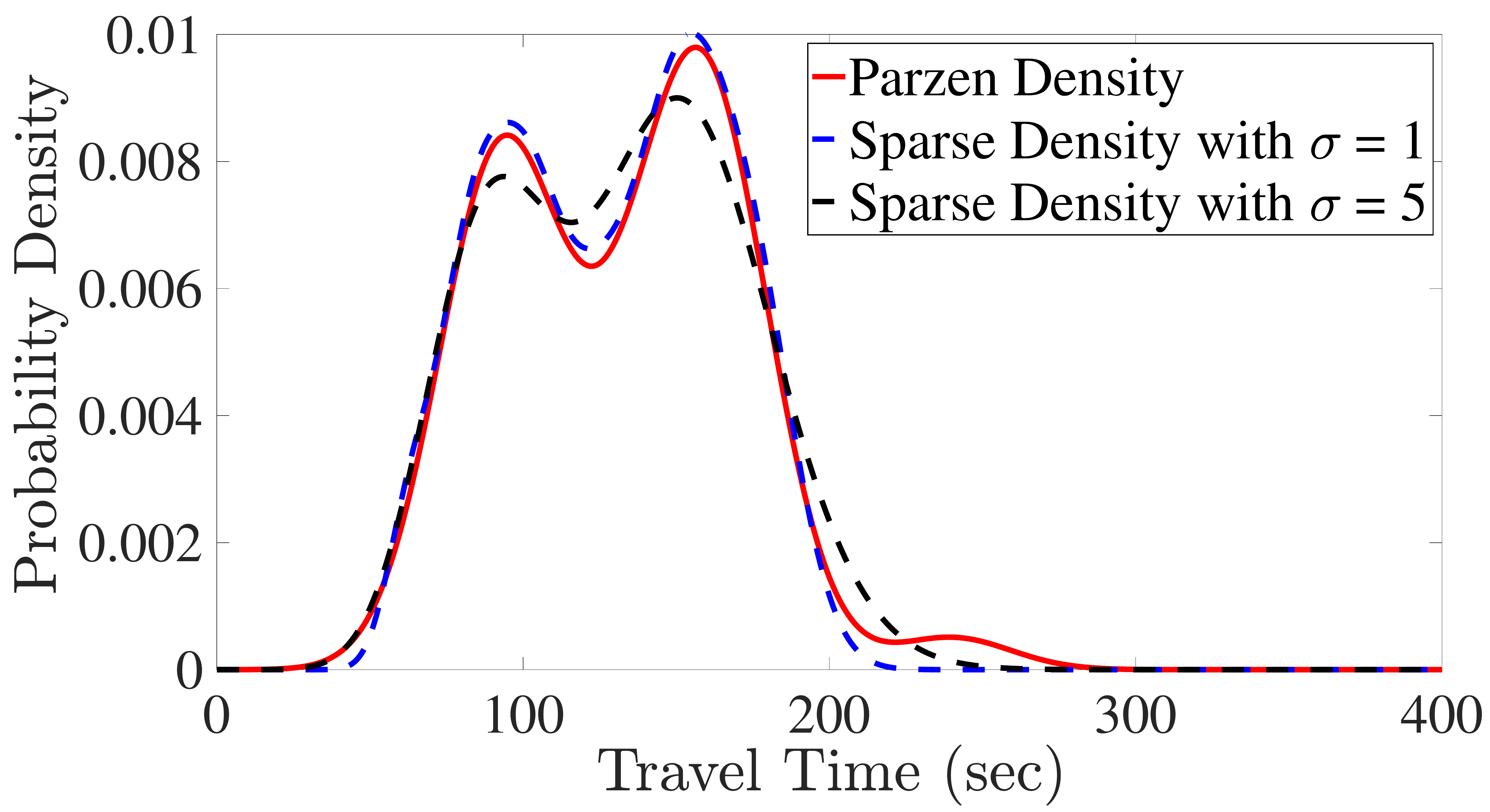}
		\includegraphics{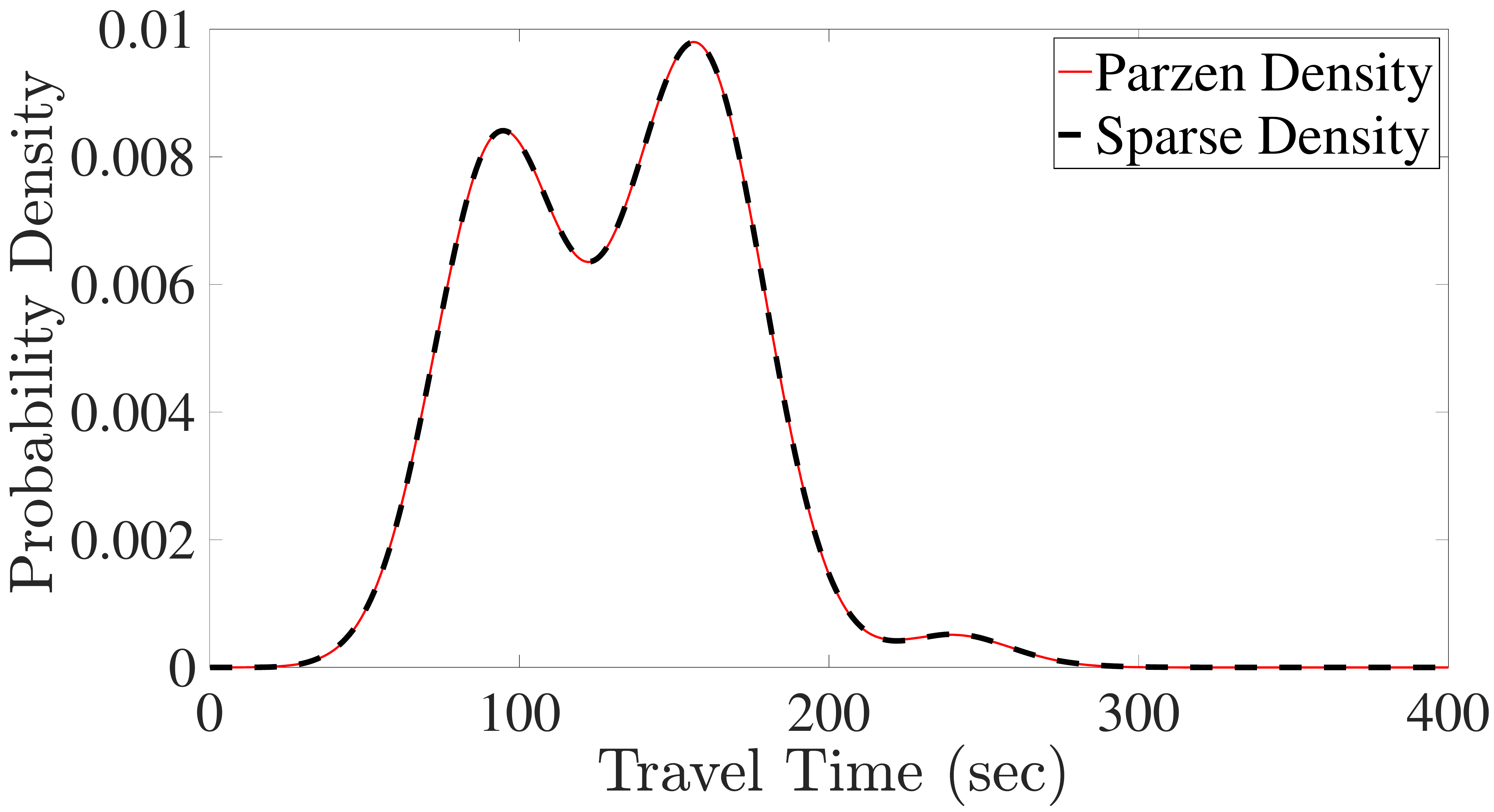}}
	
	(a) \hspace{2.6 in} (b)
	
	
	\caption{ Travel time densities of Peachtree Street (northbound, noon): (a) Gamma mixture, (b) M-L mixture.} \label{fig5}
\end{figure}

\textbf{Interpreting the results}.
From the weight vector of the M-L mixture in Figure \ref{fig6_R1}(a), it is clear that the predominant mixture components associated with the highest weights are the M-L densities with $\sigma = 5$ located at $t=97$ seconds, and $\sigma = 3$ located at $t=158$.  From this alone, we can infer the most likely travel times of the northbound (noon) traffic along Peachtree street, whereas the weight vector associated with the Gamma mixture is not quite as informative.

\subsection{Real-World Testing of Recursive Algorithm}
\label{sec:recursiveTesting}
The recursive algorithm on streaming data was tested using the I-80 dataset.  We track the changes in the travel time density on I-80 using the recursive algorithm, by taking a fixed window size of $W=100$ travel time samples for each instance of sparse density estimation (along with parameters $M'=300, N=600$ corresponding to per-second uniform discretization and scale parameters $\sigma_m \in \{1,2,3,4,5\}$, whence $M=1500$ M-L mixture components are considered). By processing the newly arriving samples one at a time (and simultaneously discarding the oldest ones), the density is constantly updated with time following the mechanism presented in \autoref{sec:rolling}.  The travel time densities for the PM peak period predicted by the recursive algorithm are depicted in Figure \ref{f10}, where we can observe that the number of modes, as well as their locations, vary significantly over time. 
\begin{figure}[h!]
	\centering
	\resizebox{0.5\textwidth}{!}{%
		\includegraphics{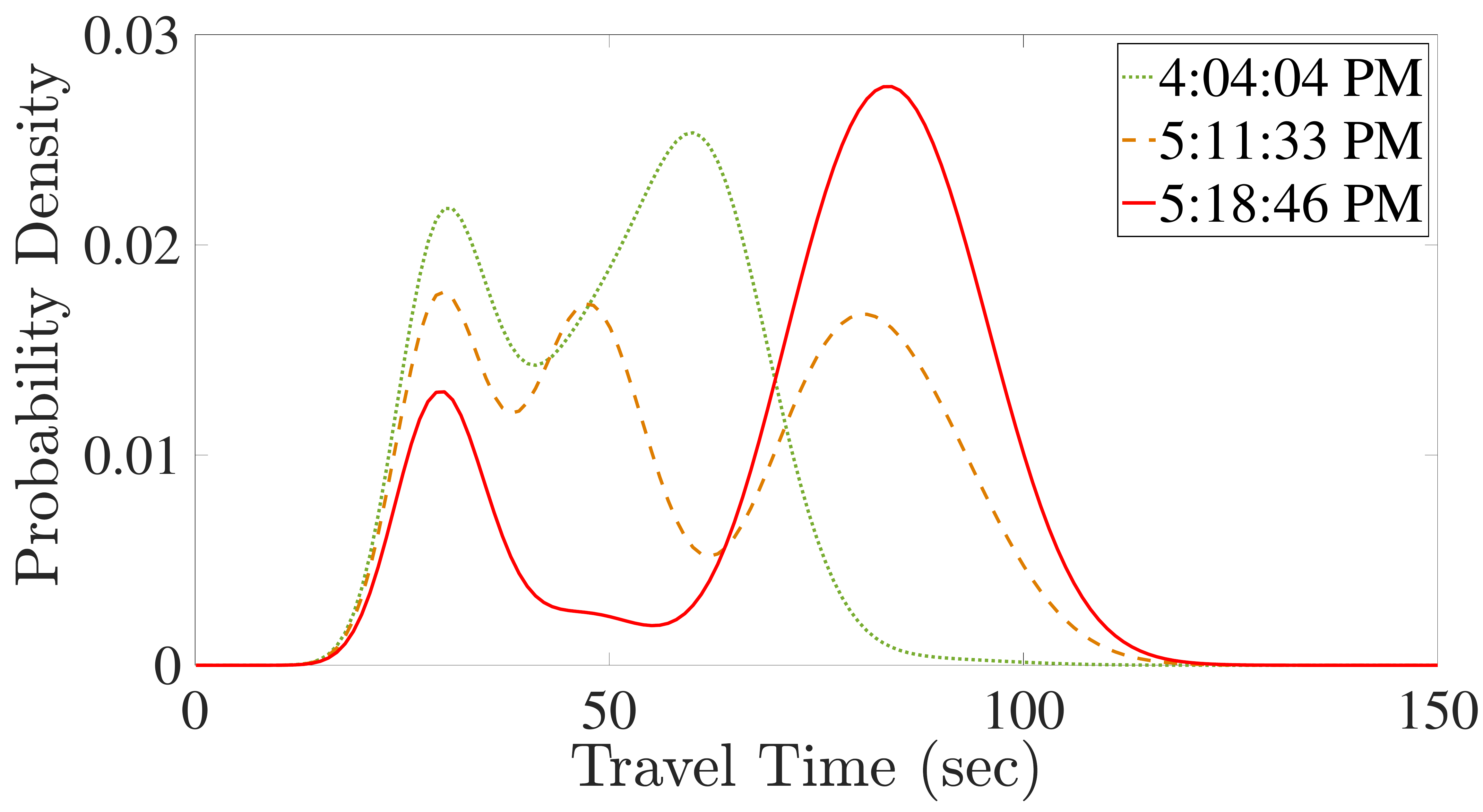}}
	\caption{Time varying travel time density on I-80 (eastbound, PM).} \label{f10}
\end{figure}

For the first time period under consideration, the travel time density at (a representative) time of  4:04PM is plotted; clearly, the density can be captured by a bi-modal distribution. This corresponds to the uncongested period where the travel times of nearly all the vehicles are below 80 seconds. However, at about 5:08PM (which represents the time when congestion begins to build up), the number of modes increases to 3, introducing a new cluster of vehicles with travel times between 70 and 120 seconds. After congestion has set in, the number of modes again reduces to 2 in the third time-period, and the locations of these modes indicate that the travel times of all vehicles have increased. In brief, these results highlight the capability of the recursive algorithm to track the varying travel time density in real-time, in a means that is also robust to the variations encountered by  individual vehicles. The model parameters estimated by the recursive algorithm reflect the underlying traffic conditions, and can capture the multi-modality in these distributions very efficiently. 

The run-time was reported to be just over 2.5 minutes for recursive estimation vs. about 2.5 hours using the standard method (non-recursive one). This experiment solidifies our claim for the feasibility of a truly real-time implementation of our methods (note that a run-time of 2.5 minutes was needed to track the variability over an interval of 45 minutes).   A series of snapshots illustrating the dynamic variation of densities is given in Figure \ref{f11}.
\begin{figure}[h!]
	\centering
	\resizebox{0.65\textwidth}{!}{%
		\includegraphics{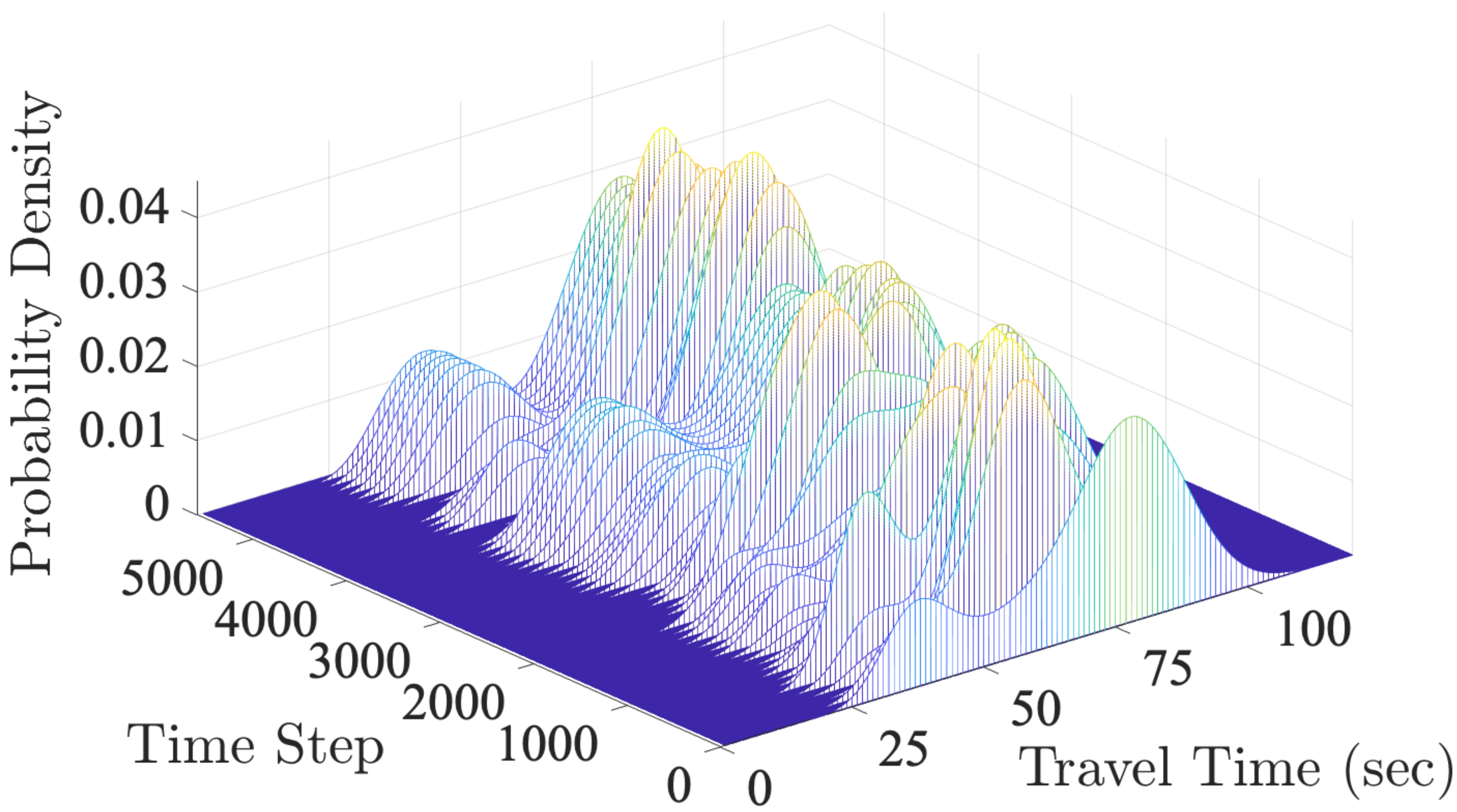}}	
	\caption{Recursive estimation of travel time densities: a snapshot of density evolution in real-time; dataset: I-80.} \label{f11}
	
\end{figure}

\section{Conclusions}
\label{sec:conclusion}
We have introduced an efficient model-based approach for  estimating travel time distributions in urban networks. Our  methods employ sparse model selection on a  mixture density to obtain parsimonious estimates that accurately characterize measured histograms of travel times.  The numerical examples employed in the paper demonstrate that the proposed approach is a viable alternative to existing density estimation techniques and yields estimates with (i) higher goodness-of-fit, (ii) substantial compression compared to the Parzen estimates (i.e., the histogram), and (iii) robustness to over-fitting. 

In this sparsity-seeking framework, ensuring integrability of the mixture densities (i.e., ensuring that the resulting function is a PDF) cannot be achieved by normalization as is traditionally done.  For this purpose, we have developed a new mixture using Mittag-Leffler functions which was shown to outperform Gaussian mixtures in terms of both accuracy and parsimony.  

Most learning algorithms, including sparse model selection, are naturally \emph{offline} in the sense that they operate on the entirety of a given dataset. To address the crucial problem of \emph{online travel time estimation}, we have proposed algorithms that directly operate on streaming data measurements in two settings: (i) successively improving the fitting fidelity when new data become available and (ii) tracking the variability of the travel times in real-time. Our experiments demonstrate a speed-up of several orders of magnitude over offline data analysis. 


\section*{Acknowledgment}
This research was funded in part by the NYU Global Seed Grant for Collaborative Research.  The work of the second author, while with NYU Abu Dhabi and NYU Tandon School of Engineering, was supported by the National Science Foundation (NSF) under grant CCF-1717207.

\appendix
\gdef\thesection{Appendix \Alph{section}}

\section{Notation} \label{A:notation}
\begin{longtable}{c p{0.85\linewidth}}
	\multicolumn{2}{c}{\textbf{General}} \\
	$\mathbb{Z}_+$, $\RR_+$ & The non-negative integers and non-negative real numbers, respectively \\
	$i$ & The imaginary unit, $i \equiv \sqrt{-1}$ \\
	$\mathbb{1}_{\{\mathsf{conditon}\}}$ & The indicator function, maps to 1 if $\mathsf{conditon}$ is true and maps to 0 otherwise \\
	$B$ & The Beta function \\
	$\Gamma$ & The Gamma function \\
	$E_{\nu}$ & Mittag-Leffler function with parameter $\nu$, $E_{\nu}(t) = \sum_{n=0}^{\infty} \frac{t^n}{\Gamma(1 + n\nu)}$ \\
	$\|\cdot \|$ & Norm, e.g., $\| y \|_2$ is the $L_2$ norm of $y$ \\
	& \\
	\multicolumn{2}{c}{\textbf{Traffic-Flow}} \\
	$C(x)$ & Vehicle crossing time at position $x$ \\
	$\Pi(x)$ & Macroscopic pace at position $x$ \\
	$\rho$ & Traffic density \\
	$V$ & Equilibrium speed function \\
	$Q$ & Equilibrium flux function (fundamental diagram), $Q(\rho) = \rho V(\rho)$ \\
	$P$ & Equilibrium pace function, which maps traffic density to pace, $P(\rho) = 1 / V(\rho)$ \\
	$\rho_{\mathrm{jam}}$ & Jammed traffic density, $V(\rho_{\mathrm{jam}}) = 0$ \\
	$v_{\mathrm{fr}}$ & Free-flow speed, $V(0) = v_{\mathrm{fr}}$ \\
	$v_{\mathrm{b}}$ & Backward wave speed, $\frac{\dd}{\dd \rho}Q(\rho_{\mathrm{jam}}) = v_{\mathrm{b}}$ \\
	& \\
	\multicolumn{2}{c}{\textbf{Probabilities and Related Notions}} \\
	$\EE$ & The expectation operator \\
	$\binom{m}{k_1 ~ \hdots ~ k_j}$ & The multinomial coefficient, $\binom{m}{k_1 ~ \hdots ~ k_j} \equiv \frac{m!}{k_1! \cdot \hdots \cdot k_j!}$ \\
	$f_X$ & The probability density function (PDF) associated with \textit{continuous} random variable $X$ \\
	$p_X$ & The probability mass function (PMF) associated with \textit{discrete} random variable $X$ \\
	$\varphi_X$ & The characteristic function associated with random variable $X$, $\varphi_X(s) \equiv \EE e^{isX}$ \\
	$f(y; \mu)$ & A PDF evaluated at $y$ with parameter (vector) $\mu$.  We casually write $f(y)$ ignoring the argument $\mu$, so as to lighten notation. \\
	$p(y; \mu)$ & A PMF evaluated at $y$ with parameter (vector) $\mu$.  We casually write $p(y)$ ignoring the argument $\mu$, so as to lighten notation. \\
	$f_{\Gamma}$ & The PDF of a Gamma distributed random variable \\
	$f_{\mathrm{M-L}}$ & Mittag-Leffler PDF, $f_{\mathrm{M-L}}(t;\beta,\sigma,b) = \frac{1}{\sigma \Gamma(\beta)} \Big(\frac{t}{\sigma}\Big)^{\beta - 1}  \Big[ E_{\nu} \Big( \big(\frac{t}{\sigma}\big)^c \Big) \Big]^{-1}$ \\
	$f_{\rho}$ & The PDF of traffic density \\
	$f_{\Pi}$ & The PDF of pace \\
	$f_{T}$ & The PDF of travel time \\
	$\varphi_{\Pi}$ & The characteristic function of pace \\
	$\varphi_{\Gamma}$ & The characteristic function of a Gamma distributed random variable \\
	& \\
	\multicolumn{2}{c}{\textbf{Travel Time Data, Empirical Distributions, and Estimated Distributions}} \\
	$w$ & Regularization parameter \\
	$T_1, \hdots, T_S$ & A sample of $S$ travel times \\
	$\mathbb{T}_W^{(j)}$ & The $j$th window taken from stream travel time data, $\mathbb{T}_W^{(j)} = \{T_j, \hdots, T_{j + W-1}\}$, where $W$ is the window width \\
	$\Delta$ & Time discretization constant \\
	$\{\tau_n\}_{n=0}^{N-1}$ & A set of discrete travel times, defining the support (or domain) of the PMFs, $t_n = n\Delta$ \\
	$\{t_m\}_{m=0}^{M-1}$ & A subset of $\{\tau_n\}_{n=0}^{N-1}$ representing the locations of the mixture PDFs \\
	$\tau$ & A function that maps a continuous travel time $t$ to a discrete travel $\tau_n$ ($\tau_n$ is the representative of $t$ in the discrete set) \\
	$\widehat{f}$ & Empirical distribution of travel times, also known as the Parzen Window (PW) estimator. $\widehat{f}(t)$ is the frequency of travel time $t \in \RR_+$, as established empirically. \\
	$k_h$ & A kernel, window, or bin of width $h$;  $k_h(t - T_j)$ provides a measure of the distance between travel time $t \in \RR_+$ and the data point $T_j$. \\
	$\overline{f}$ & Mixture PDF (to be estimated) \\
	$\phi_m$ & The $m$ mixture component of $\overline{f}$ (a PDF) \\
	$\theta_m$ & The weight associated with the $m$th mixture component \\
	$\theta$ & $M$ dimensional vector of mixture weights \\
	$\widehat{p}$ & Discrete empirical distribution, an $N$ dimensional vector $\widehat{p} = [\widehat{p}_0 ~ \hdots ~ \widehat{p}_{N-1}]^{\top}$ with $\widehat{p}_n \propto \widehat{f}(\tau_n)$ \\
	$\overline{p}$ & Discrete mixture distribution, an $N$ dimensional vector $\overline{p} = [\overline{p}_0 ~ \hdots ~ \overline{p}_{N-1}]^{\top}$ with $\overline{p}_n \propto \overline{f}(\tau_n)$ \\
	$c_n$ & Discretization constant used to define the discrete component densities; specifically, $\phi_{n,m} \equiv c_n \phi_m(\tau_n)$ \\
	$\Phi$ & The matrix $[\phi_{n,m}] \in \RR_+^{N\times M}$: we have $\overline{p} = \Phi \theta$ \\
	$\Psi$ & A $N\times N$ matrix with elements $\Psi_{n,i} = \kappa_h(\tau_n - \tau_i)$ \\
	$\widehat{f}^{(K)}$ & Parzen density established using the fist $K$ travel times $\{T_1, \hdots, T_K\}$ \\
	$\widehat{f}^{(j)}$ & Parzen density established using travel time data $\mathbb{T}_W^{(j)}$ \\
	$\widehat{p}^{(K)}$ & Discrete empirical distribution established using the fist $K$ travel times $\{T_1, \hdots, T_K\}$ \\
	$\widehat{p}^{(j)}$ & Discrete empirical distribution established using travel time data $\mathbb{T}_W^{(j)}$ \\
	$\widehat{\theta}^{(K)}$ & Estimated mixture weights using the first $K$ travel times $\{T_1, \hdots, T_K\}$ \\
	$\widehat{\theta}^{(j)}$ & Estimated mixture weights using travel time data $\mathbb{T}_W^{(j)}$
\end{longtable}

\section{Ensuring Summability of Mixture Weights to Unity} \label{A:summability}
Let $[\theta^*_0 \hdots \theta^*_{M-1}]^{\top}$ solve \eqref{eq_LASSO}, where (for the sake of generality) the matrix elements $\phi_{n,m}$ are given by \eqref{eq_genGammaDen}.  Suppose $\sum_{m=0}^{M-1} \theta^*_m < 1$
.  To address the summability to unity issue, we may append a single component density to the solution with negligible impact on the outcome.  Consider the vector $\psi(t^{\prime},\sigma^{\prime})  \in \RR^N$, the elements of which are given by
\begin{align}
\psi_n(t^{\prime},\sigma^{\prime})  \equiv \sigma^{\prime} f_{\mathrm{M-L}} \Big(t^{\prime}; 1+\frac{\tau_n}{\sigma^{\prime}}, \sigma^{\prime} \Big),
\end{align}
for $n=0,1,\hdots N-1.$
The parameters $t^{\prime}$ and $\sigma^{\prime}$ are chosen so that 
\begin{align}
\underset{n \in \{0,\hdots,N-1\}}{\max} ~ \psi_n(t^{\prime},\sigma^{\prime}) \le \varepsilon, \label{eq_mode}
\end{align}
for some predefined tolerance threshold $\varepsilon > 0$. 
Define $\Delta^{\prime} \equiv \frac{\Delta}{\sigma^{\prime}}$ so that $\psi_n(t^{\prime},\sigma^{\prime})  = \sigma^{\prime} f_{\mathrm{M-L}} (t^{\prime}; 1+n\Delta^{\prime}, \sigma^{\prime} )$. 
Consider a choice of $t^{\prime}$ and $\sigma^{\prime}$ so that $\frac{t^{\prime}}{\sigma^{\prime}} = 1$,  then
\begin{align}
\underset{n \in \{0,\hdots,N-1\}}{\max} ~ \sigma^{\prime} f_{\mathrm{M-L}} (t^{\prime}; 1+n\Delta^{\prime}, \sigma^{\prime} ) = \underset{n \in \{0,\hdots,N-1\}}{\max} ~ \frac{1}{\Gamma(1 + n\Delta^{\prime}) E_{\Delta^{\prime}}(1) }. 
\end{align}
A well-known property of the Gamma function is that it achieves a global minimum in $\RR_+$, which is $\Gamma(x_{\min}) = 0.885603$ (for $x_{\min} = 1.461632$). 
%
%
Consequently,  
\begin{align}
\underset{n \in \{0,\hdots,N-1\}}{\max} \psi_{n}(t^{\prime},\sigma^{\prime}) \le \frac{1}{0.88 E_{\Delta^{\prime}}(1)}
\end{align}
so that $\sigma^{\prime}$ is chosen to ensure that
\begin{align}
E_{\frac{\Delta}{\sigma^{\prime}}}(1) \ge \frac{1}{0.88 \varepsilon}.
\end{align}
We now append $\psi(t^{\prime},\sigma^{\prime})$ to $\Phi$ (as a column to the right) and set $\theta = [\theta^{*\top}, ~~ 1-\sum_{m=0}^{M-1} \theta_m^* ]^{\top}$.
First, notice that the choice of $\sigma^{\prime}$ above does not depend on $\theta^*$ (but depends exclusively on the discretization interval $\Delta$).  Therefore, this calculation can be performed offline.  
Since $\theta_M = 1-\sum_{n=0}^{M-1} \theta_m^*  < 1$, by design, we know that the contribution of $\psi(t^{\prime},\sigma^{\prime})$  to $\overline{p}$ is smaller than $\varepsilon$ (since its  contribution to all support values $\{\tau_n\}_{n=0}^{N-1}$ is smaller than $\varepsilon$). 
This motivates restricting attention to LASSO constrained to the positive orthant (vs. the probability simplex).

\section{Increasing the Sparsity} \label{A:sparsity}
In the adaptive case, we can further increase the sparsity by scaling the weights $\theta$ in a way that favors mixture components with larger scale parameters, as a type of \emph{preconditioning}. Formally, let $\varSigma \in \RR^{M \times M}$ be a diagonal matrix with elements $\varSigma_{m,m} = \sigma_m$ and consider the following re-scaled version of the estimation problem: 
\begin{equation}
\underset{\theta \in \Omega}{\mathrm{minimize}} ~~ \frac{1}{2} \big\lVert \widehat{p} - \Phi \theta \big\rVert_2^2 + w \big{\lVert} \varSigma^{-1} \theta \big{\rVert_1}. \label{eq_adaptiveProb_mod}
\end{equation}
By modifying the weight vector in this way, we  penalize each mixture component in proportion to the inverse of its scale parameter. This is done to  encourage the sparse density algorithm to choose  mixture components with larger scale parameters (hence, fewer components) to capture the distribution.  Informally, when two or more mixture components yield a fitting accuracy comparable with one wider component, the latter will be selected. 

When $\Omega = \RR_+^M$, we have the constrained LASSO problem
\begin{equation}
\underset{\theta \in \RR^{M+1}_+}{\mathrm{minimize}} ~~ \frac{1}{2} \big\lVert \widehat{p} - \Phi \theta \big\rVert_2^2 + w \mathbf{1}^\top \varSigma^{-1} \theta,
\end{equation}
where a logarithmic barrier for the non-negative constraints can be augmented to the objective function to obtain the associated centering problem 
\begin{equation}
\underset{\theta \in \RR^{M+1}}{\mathrm{minimize}} ~~ \frac{z}{2} \big\lVert \widehat{p} - \Phi \theta \big\rVert_2^2 + zw \sum_{m=0}^{M} \frac{1}{\sigma_m} \theta_m  -\sum_{m=0}^{M}\log(\theta_i).
\end{equation}

\section{Post-Processing}\label{A:postProcessing}
Once a numerical solution of LASSO~\eqref{eq:consLASSO} is obtained, it is important to numerically \emph{post-process} it. For example, we aim for `zero' values in the solution vector $\theta$, but this practically corresponds to very small entries. One simple yet effective way to define the zero entries of $\theta$ is by thresholding, e.g., setting all entries $|\theta_i| < \epsilon \|\theta\|_{\infty}$ to zero, for some small value of $\epsilon$, e.g. $\epsilon = 10^{-3}$.  After thresholding, the support of the solution $\supp (\widehat{\theta})$ (the set of non-zero entries) and the corresponding number of non-zero entries $s_w\equiv |\supp(\theta)|$ are determined. An additional way to improve sparsity is by combining nearby mixture components that appear in the (thresholded) solution, i.e., mixture components whose locations lie within a predetermined distance. Finally, we may improve the reconstruction fidelity by performing constrained least-squares on the resulting support: i.e., we obtain a new matrix $\Phi_s\in\RR^{N\times s_w}$ by selecting the set of columns of $\Phi$ corresponding to the  support, and perform constrained least-squares to update the entries $\widehat{\theta}_s$:
\begin{equation}\label{eq:debiasing}
\underset{\theta_s \in \Omega_s}{\mathrm{minimize}} ~~ \big\lVert \widehat{p} - \Phi_s \theta_s \big\rVert_2^2.
\end{equation}
This is usually referred to as a \emph{de-biasing} step, where $\Omega_s = \RR^{s_w}_+$.

\section{Choice of Regularization Parameter} \label{A:regularization}
The regularization parameter $w$ controls the trade-off between sparsity and reconstruction error. If the regularization parameter $w$ is sufficiently large most of the coefficients are driven to zero, thus leading to a sparse model with only a few (relevant) mixture density functions. However, this typically leads to poor fitting accuracy (low goodness-of-fit). On the other hand, when $w$ is sufficiently small one retrieves the best possible fit (non-negative least-squares), which is (in general) not sparse: most (typically, all) coefficients are non-zero.  In selecting $w$, the aim is to balance the trade-off between goodness-of-fit and sparsity. The problem of choosing the appropriate regularization parameter is crucial as it governs the selection of the sparsest model that can faithfully reconstruct the underlying distribution of the data.  One approach to select a suitable $w$, which makes good use of the available dataset, is $k$-fold cross-validation \citep{efron1983leisurely,turney1994theory}. Notwithstanding, cross-validation techniques do not promote sparsity in general, but are rather geared towards avoiding \textit{overfitting}. Moreover, an issue with  cross-validation is that it does not lead to consistent model selection for LASSO. 

We propose a simple scheme for tuning the parameter $w$ to balance the trade-off  between goodness-of-fit and sparsity.  For this purpose, we use a metric inspired by the analysis in \citep{reid2013study, sun2012scaled} on \emph{scaled-LASSO}, namely
\begin{equation}
S^2_w \equiv \frac{\|\widehat{p} - \Phi \theta(w) \|_2^2}{M-s_w}, \label{eq_metric}
\end{equation}
where $s_w \equiv |\supp(\theta(w))|$ is the cardinality of the support set (as determined via the post-processing mechanism in \ref{A:postProcessing}), i.e., the number of non-zero entries of the solution vector. We use $\theta(w)$ to emphasize the dependence of the (constrained) LASSO solution on the regularizing parameter $w$. The metric $S^2_w$ in \eqref{eq_metric} captures the trade-off between (i) goodness-of-fit, as measured by the squared $\ell_2-$error $\|\widehat{p} - \Phi \theta(w) \|_2^2$ and (ii) sparsity $(M-s_w)$ (the number of zeros in the solution $\theta$): it is proportional to the former and inversely proportional to the latter. Note, therefore, that seeking to minimize this metric leads to aiming for \emph{simultaneously maximizing the goodness-of-fit and parsimony}, and this is exactly the approach that we adopt in this paper.  
Last, $S^2_w$ is well-defined for $s_w <M$, i.e., it is not defined for values of $w$ close to 0 
where typically $s_w=M$ ($S^2_w$ is finite on a set $(w_{\min},+\infty)$ for some $w_{\min}>0$  because of the continuity of the optimal solution $\theta$ in $w$); we may extend it to take the value infinity in such case (since a sparse solution is desirable).

For $w = 0$, one retrieves the constrained least-squares solution: 
\begin{equation}
\underset{\theta \in \Omega}{\mathrm{minimize}} ~~ \big\lVert \widehat{p} - \Phi \theta \big\rVert_2^2, 
\end{equation}
which serves as a lower bound for the squared $\ell_2-$error (best possible goodness-of-fit) but is known to be non-sparse ($s_{w} = M$ in most cases). For $w > w_0$ where
\begin{equation}
w_0 \equiv \|\Phi^{\top} \widehat{p}\|_{\infty},
\end{equation}
the all-zero solution is retrieved ($s_w = 0$); this maximizes sparsity but yields a squared $\ell_2-$error equal to $\|\widehat{p}\|_2^2$.
One may then search over variable values of $w$ and select the one that \emph{minimizes} $S_w^2$. For example, we may consider values of $w$ in a logarithmic scale: starting from $w_0$ we evaluate  $S_w^2$ for values $w_k =  \eta^k w_0$ for some $\eta \in (0,1)$, e.g., $\eta = 0.95$ was chosen in our experiments, where $k$ is successively increased until a termination criterion is met. In our experiments we have considered: 
\begin{align}
\frac{\|\widehat{p} - \Phi \theta(w_{k+1}) \|_2 - \|\widehat{p} - \Phi \theta(w_k) \|_2 }{\|\widehat{p} - \Phi \theta(w_k) \|_2} < \epsilon',
\end{align} 
with $\epsilon'=10^{-3}$.  An alternative is to achieve a desirable sparsity level \emph{exactly} by means of the search mechanism above in conjunction with bisection. This can be applied to all the sparse estimation problems that we consider in this paper (see Figure \ref{f2_R2} for illustration).

	
	
	
\bibliographystyle{plainnat}
\bibliography{refs.bib}
	
	
	
	
	
	

\end{document}